\documentclass[11pt,a4paper]{article}

\usepackage[utf8]{inputenc}
\usepackage[T1]{fontenc}
\usepackage{amsmath,amssymb,amsthm}
\usepackage{bm}
\usepackage{graphicx}
\usepackage{booktabs}
\usepackage{hyperref}
\usepackage[margin=1in]{geometry}
\usepackage{enumitem}
\usepackage{cite}
\usepackage{dsfont}
\usepackage{mathtools}
\usepackage{makecell}

\newtheorem{theorem}{Theorem}[section]

\newtheorem{proposition}[theorem]{Proposition}
\newtheorem{corollary}[theorem]{Corollary}
\newtheorem{conjecture}[theorem]{Conjecture}
\newtheorem{definition}[theorem]{Definition}
\theoremstyle{remark}
\newtheorem*{remark}{Remark}

\DeclareMathOperator{\softmax}{softmax}
\DeclareMathOperator{\ReLU}{ReLU}
\DeclareMathOperator{\Var}{Var}
\DeclareMathOperator{\EffRank}{EffRank}

\DeclareMathOperator{\softplus}{softplus}
\DeclareMathOperator{\rank}{rank}
\DeclareMathOperator{\spn}{span}

\title{\LARGE \bfseries Riemannian Attention Mechanisms for Transformers:\\[4pt]
A Theoretical Framework and Architecture Design}

\author{Sen Song\\[4pt]
\small Independent Researcher, Guangzhou, China\\[2pt]
\small E-mail: \texttt{scottsong@live.com} \textbar{} ORCID: 0009-0001-1178-2832}

\date{}

\begin{document}
\maketitle

\begin{abstract}
\noindent
All Transformer-based large language models compute attention via the Euclidean inner product, an architectural choice that Dong et al.\ (2021) proved causes representational rank to decay doubly exponentially with depth in pure self-attention stacks. We develop a theoretical framework that \emph{targets} this structural limitation at the mathematical level by replacing the flat Euclidean metric with \emph{learned per-token Riemannian metrics}. Our contributions are threefold. \textbf{(1)}~We prove that Riemannian attention scores with heterogeneous per-token metrics are \emph{non-Gram} in the non-degenerate regime: they cannot be factorized as $QK^\top$ with factorization dimension $O(d)$ (Theorem~\ref{thm:nongram}); the obstruction disappears when the metric correction is identically zero or row-constant (Section~\ref{sec:counterexample}). We are explicit that this is a structural observation, \emph{not} a proof of rank preservation: the Riemannian attention matrix $A^{\mathrm{Riem}} = \softmax(S^{\mathrm{Riem}})$ remains row-stochastic, and the core collapse mechanism (convergence of products of row-stochastic matrices to rank-1) is independent of the Gram property. Whether Riemannian attention resists this contraction in the large-perturbation regime is the central open problem identified by this framework. To characterize when a positive answer is plausible, we provide a perturbation analysis (Section~\ref{sec:perturbation}) that bounds the spectral deviation of $A^{\mathrm{Riem}}$ from its Euclidean counterpart and identifies a critical metric-strength regime $\bar U^2_{\mathrm{crit}} = \Theta(\tau/L)$: below it, the geometric correction is provably too weak to matter; above it, standard perturbation tools break down. We further show that raw metric strength and cross-token diversity are \emph{insufficient} for anti-collapse: metric factors orthogonal to the active query--key difference subspace have no effect on the scores, and row-constant corrections are canceled by softmax (Section~\ref{sec:counterexample}). We therefore state a corrected rank-preservation conjecture (Conjecture~\ref{conj:rankstrong}) under explicit directional conditions, namely overlap of the metric factors with the active query--key difference subspace and contrastive (key-wise) metric activity, and prove that a permutation-margin condition with summable leakage suffices for depth-uniform spectral preservation (Proposition~\ref{prop:margin}). \textbf{(2)}~We establish complexity bounds for the low-rank parameterization $g_t = I + U_t U_t^\top$: the metric-dependent terms of the geodesic distance add $O(d \cdot r)$ precomputation per token and $O(r)$ per pair, and metric inversion costs $O(d \cdot r^2)$ via the Woodbury identity (Proposition~\ref{prop:woodbury}), both far below the $O(d^3)$ cost of a general $d \times d$ matrix, with a score-level overhead of $O(r/d)$. The metric-generation MLP (MetricNet) carries $O(d^2 r)$ parameters (Proposition~\ref{prop:expressivity}) and costs $O(BL d^2 r)$ per layer, so the total overhead relative to the $O(BL^2 d)$ attention cost is $O(r/d + dr/L)$, which is small in the long-sequence regime $L \gg dr$ (Proposition~\ref{prop:complexity}); we stress that these are complexity bounds, and that runtime feasibility and training stability at scale remain to be demonstrated empirically. \textbf{(3)}~We present the \emph{Fiber Bundle Transformer}, a complete architecture specification, framed as a structural analogy to the differential-geometric notion rather than a strict fiber-bundle construction, in which each token position carries its own Riemannian metric, attention is geodesic distance computation, feed-forward updates use metric-preconditioned (Riemannian-gradient-inspired) steps, and the connection carries explicit curvature and torsion proxies. We derive formal predictions: curvature heterogeneity should emerge as an optimization consequence (Conjecture~\ref{conj:hetero}), train/inference metric blending is generically suboptimal under mismatch (Proposition~\ref{prop:consistency}), and metric collapse to identity is the dominant failure mode requiring architectural countermeasures (Conjecture~\ref{conj:collapse}). This paper presents theoretical analysis and architectural design; empirical validation is the subject of future work.
\end{abstract}

\section{Introduction}

Standard Transformer architectures \cite{vaswani2017attention} process all token representations in a flat Euclidean space. The attention mechanism computes similarity via the Euclidean inner product $\mathbf{q}_i^\top \mathbf{k}_j / \sqrt{d}$, the residual stream performs Euclidean vector addition $x \leftarrow x + F(x)$, and feed-forward layers apply global linear transformations. At no point does the architecture permit the geometry of the representational space to bend, stretch, or adapt to the semantic content being processed.

This architectural flatness has a proven structural consequence. Dong et al.\ \cite{dong2021attention} showed that in a pure self-attention stack, the rank of hidden representations decays \textbf{doubly exponentially} with depth, a phenomenon known as \emph{dimensional collapse}. Even in standard Transformers with residual connections and feed-forward sublayers, the underlying pressure toward low-rank representations persists, and empirical studies consistently observe that deep layers operate with far fewer effective dimensions than the nominal hidden size.

The core mathematical limitation is precise: in a $d$-dimensional Euclidean space, altering the relative distances between token representations can only be accomplished by \textbf{moving the vectors themselves}, providing $d$ degrees of freedom per token. A Riemannian metric $g(x) \in \mathbb{R}^{d \times d}$, by contrast, provides $O(d^2)$ degrees of freedom to independently stretch or compress distances along different directions, \emph{without} moving the underlying vectors. This is not merely an efficiency gain; it is a qualitative increase in the expressive capacity of the geometric operations available to the model.

\paragraph{Contributions.} This paper presents a theoretical framework and architecture design for Riemannian attention in Transformers. We do not report empirical results; the contributions are mathematical:

\begin{enumerate}[label=\arabic*.,leftmargin=*]
    \item \textbf{Structural analysis of Riemannian attention} (Section~\ref{sec:theory}): We prove that per-token Riemannian metrics render the attention score matrix \emph{non-Gram} (it cannot be factorized as $QK^\top$ with factorization dimension $O(d)$, Theorem~\ref{thm:nongram}) in the non-degenerate regime; the obstruction disappears when the metric correction vanishes or is row-constant. We are explicit that this is a structural observation, \emph{not} a proof of rank preservation: the Riemannian attention matrix $A^{\mathrm{Riem}} = \softmax(S^{\mathrm{Riem}})$ remains row-stochastic, and the core collapse mechanism (convergence of row-stochastic matrix products) does not depend on the Gram property of the pre-softmax scores. We then identify \emph{necessary} conditions for anti-collapse: raw metric strength and cross-token diversity are insufficient, because metric factors orthogonal to the active query--key difference subspace have no effect on the scores and row-constant corrections are canceled by softmax (Section~\ref{sec:counterexample}). Building on a perturbation analysis (Section~\ref{sec:perturbation}) that bounds the spectral deviation between Euclidean and Riemannian attention and identifies a critical regime $\bar U^2_{\mathrm{crit}} = \Theta(\tau/L)$, we state a corrected rank-preservation conjecture (Conjecture~\ref{conj:rankstrong}) requiring directional overlap with the active query--key difference subspace and contrastive (key-wise) metric activity, and we prove that a permutation-margin condition with summable leakage suffices for depth-uniform spectral preservation (Proposition~\ref{prop:margin}).

    \item \textbf{Computational complexity} (Section~\ref{sec:efficient}): We show that the low-rank representation $g_t = I + U_t U_t^\top$ makes the metric-dependent terms of the geodesic distance computable with $O(d \cdot r)$ precomputation per token and $O(r)$ per pair, and metric inversion in $O(d \cdot r^2)$ via the Woodbury identity and algebraic decomposition, with a score-level overhead $O(r/d)$ (Proposition~\ref{prop:complexity}). The metric-generation MLP has $O(d^2 r)$ parameters (Proposition~\ref{prop:expressivity}) and costs $O(BL d^2 r)$ per layer; the total overhead relative to the $O(BL^2 d)$ attention cost is $O(r/d + dr/L)$, small when $L \gg dr$. These are asymptotic complexity bounds; whether they translate into practical billion-parameter-scale training remains to be demonstrated empirically.

    \item \textbf{Architecture specification} (Section~\ref{sec:architecture}): We present the \emph{Fiber Bundle Transformer}, named by structural analogy to the differential-geometric notion rather than as a strict fiber-bundle construction, in which token positions are fibers over a discrete base space, attention computes geodesic distances under per-token metrics, the connection carries curvature and torsion proxies, and feed-forward layers perform metric-preconditioned updates. We specify each component (MetricNet, TorsionNet, geodesic attention, metric-preconditioned FFN) with its mathematical justification and an explicit analysis of the approximation errors introduced at each phase.

    \item \textbf{Theoretical predictions} (Section~\ref{sec:predictions}): We derive formal predictions about the behavior of correctly implemented geometric architectures, including: curvature heterogeneity as an optimization consequence (Conjecture~\ref{conj:hetero}), generic suboptimality of train/inference blending mismatch (Proposition~\ref{prop:consistency}, proved via the implicit function theorem without convexity assumptions), and metric collapse as the dominant failure mode (Conjecture~\ref{conj:collapse}). These predictions identify the conditions under which empirical evaluation can validly test the framework.
\end{enumerate}

\section{Background: The Rank Collapse Problem}
\label{sec:background}

\subsection{Dong et al.'s Rank Collapse Theorem}

We restate the key result of Dong et al.\ \cite{dong2021attention} in a form suitable for our analysis.

\begin{definition}[Pure self-attention stack]
A \emph{pure self-attention stack} of depth $n$ is a sequence of layers, each applying
\begin{equation}
    H^{(\ell+1)} = A^{(\ell)} H^{(\ell)}, \qquad A^{(\ell)} = \softmax\!\left(\frac{H^{(\ell)} W_Q^{(\ell)} (H^{(\ell)} W_K^{(\ell)})^\top}{\sqrt{d}}\right)
\end{equation}
with \emph{no} residual connections ($H^{(\ell+1)} \neq H^{(\ell)} + A^{(\ell)} H^{(\ell)}$) and \emph{no} feed-forward sublayers.
\end{definition}

\begin{definition}[Effective rank]
\label{def:effrank}
For a nonzero matrix $X \in \mathbb{R}^{L \times d}$ with singular values $\sigma_1(X) \ge \sigma_2(X) \ge \cdots \ge 0$, the \emph{effective rank} is the \emph{stable rank}
\begin{equation}
\label{eq:effrank}
    \EffRank(X) \;=\; \frac{\|X\|_F^2}{\|X\|_{\mathrm{op}}^2} \;=\; \frac{\sum_{k} \sigma_k(X)^2}{\sigma_1(X)^2}.
\end{equation}
It satisfies $1 \le \EffRank(X) \le \operatorname{rank}(X) \le \min\{L, d\}$, with $\EffRank(X) = 1$ exactly for rank-$1$ matrices. It is a continuous function of $X$ and therefore captures the \emph{numerical} (effective) rank: a matrix whose second singular value is small relative to its largest has $\EffRank$ close to $1$, while a matrix with slowly decaying singular values has $\EffRank$ close to $\min\{L, d\}$. Throughout this paper, ``rank collapse'' means $\EffRank \to 1$ under this definition.
\end{definition}

\begin{theorem}[Dong et al.\ 2021, informal]
\label{thm:dong}
For a pure self-attention stack of depth $n$ with input $H^{(0)} \in \mathbb{R}^{L \times d}$, under mild conditions on the weight matrices, the effective rank (Definition~\ref{def:effrank}) of $H^{(n)}$ decays doubly exponentially toward $1$:
\begin{equation}
    \EffRank\!\big(H^{(n)}\big) \;\leq\; 1 + (d - 1) \cdot \exp(-c \cdot 2^n)
\end{equation}
for some constant $c > 0$ depending on the weight matrices and input. Equivalently, $\EffRank(H^{(n)}) \to 1$ as $n \to \infty$, i.e.\ the representation collapses to a rank-$1$ matrix in which all token representations become identical up to scaling. (A literal rank bound tending to $0$ would be nonsensical, since rank is a non-negative integer and the limit object is rank-$1$, not zero.)
\end{theorem}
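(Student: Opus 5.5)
The plan follows the strategy of Dong et al.\ \cite{dong2021attention}: measure distance from rank one by a residual, prove that one layer roughly cubes that residual, and then convert the resulting decay of the residual into the stated bound on $\EffRank$.

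\textbf{Residual.} For $H \in \mathbb{R}^{L\times d}$ define
\[
\operatorname{res}(H) = H - \mathbf{1}x^\top, \qquad x = \arg\min_{x'} \|H - \mathbf{1}x'^\top\|_{1,\infty},
\]
where the mixed $\ell_1/\ell_\infty$ composite norm is taken as $\sqrt{\|\cdot\|_1\|\cdot\|_\infty}$. The key observation concerns $A = \softmax(S)$ with $S = HW_QW_K^\top H^\top/\sqrt d$. Writing $H = \mathbf{1}x^\top + R$ with $R = \operatorname{res}(H)$ and expanding $S$, the terms $\mathbf{1}(\cdot)^\top$ are constant along each row. Softmax cancels them, because it is invariant to adding a row-constant vector. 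Only two kinds of terms survive: terms of the form $(\cdot)\mathbf{1}^\top$, which act identically on all query rows, and the term $RW_QW_K^\top R^\top/\sqrt d$. It follows that $A = \mathbf{1}a^\top + E$, where the deviation $E$ is controlled by $\|R\|$. Then
\[
AH = \mathbf{1}(a^\top H) + ER .
\]
Bounding $E$ via the Lipschitz property of softmax together with the $D$-matrix sandwich argument ($e^{-\delta}\le$ ratios $\le e^{\delta}$) gives
\[
\|ER\| \lesssim \frac{4\beta}{\sqrt d}\,\|R\|^3, \qquad \beta = \|W_QW_K^\top\|_{1,\infty}\,\|W_V\| .
\]
So one layer maps the residual norm $r_\ell$ to $r_{\ell+1}\le \gamma r_\ell^3$. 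The constant $\gamma$ depends on the weights; controlling it uniformly across layers is the "mild conditions" hypothesis. Iterating the recursion gives
\[
r_n \le \gamma^{(3^n-1)/2}\, r_0^{3^n},
\]
and whenever $\sqrt{\gamma}\,r_0 < 1$ this is $\exp(-c\cdot 3^n)$, which is at most $\exp(-c\cdot 2^n)$ after adjusting $c$.

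\textbf{Converting to effective rank.} The final step turns a small residual into $\EffRank$ close to $1$. Since $\mathbf{1}x^\top$ has rank one, Weyl's inequality gives $\sigma_k(H)\le \|R\|_{\mathrm{op}}$ for all $k\ge 2$, while $\sigma_1(H) \ge \|\mathbf{1}x^\top\|_{\mathrm{op}} - \|R\|_{\mathrm{op}}$. Hence
\[
\EffRank(H) \le 1 + (\min\{L,d\}-1)\,\frac{\|R\|_{\mathrm{op}}^2}{\sigma_1^2}.
\]
The operator norm is compared with the composite norm through dimension factors, which are absorbed into $c$. Squaring the residual only doubles the exponent, and it yields the factor $(d-1)$ in the stated bound.

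\textbf{Main obstacle.} The hard step is the lower bound on $\sigma_1(H^{(n)})$. The rank-one part $\mathbf{1}(a^\top H)$ could itself shrink, so that the ratio $\|R\|/\sigma_1$ does not decay. I will assume, as part of the "mild conditions", that the row-mean component stays bounded below. This is automatic if the value maps are near-isometries, since row-stochastic averaging preserves a nonzero mean direction.

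A cleaner alternative for this step works directly with the product $P_n = A^{(n-1)}\cdots A^{(0)}$ of row-stochastic matrices with entries bounded below. Contraction of the Dobrushin coefficient forces $P_n \to \mathbf{1}\pi^\top$, and then $H^{(n)} = P_n H^{(0)}W$ is dominated by $\mathbf{1}\,\pi^\top H^{(0)}W$. I would use this route only if the first one fails, because it yields merely exponential decay rather than the doubly exponential rate, so it cannot prove the stated bound.
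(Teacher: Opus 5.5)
Your contraction step is Dong et al.'s own argument. That is all the paper relies on here: it gives no proof, only the gloss that row-stochasticity drives collapse while the Gram structure supplies the rate. This agrees with your observation that the Dobrushin route gives only an exponential rate. There is one slip: you have the two families of terms swapped. A term $u\mathbf{1}^\top$ (entries $u_i$) is constant along each row and is removed by row-wise softmax. A term $\mathbf{1}v^\top$ (entries $v_j$) shifts every query row identically and survives. The surviving linear-in-$R$ term $\mathbf{1}(RW_KW_Q^\top x)^\top/\sqrt d$ is absorbed \emph{exactly} into $a=\softmax(v)$. That is why $E=O(\|R\|^2)$ rather than merely ``controlled by $\|R\|$'', and why the recursion is cubic.

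The genuine gap is the lower bound on $\sigma_1(H^{(n)})$, and your justification for it is false. A vanishing residual does not force $\EffRank\to 1$ if the rank-one part vanishes too, and near-isometric value maps cannot prevent that. Take $d=2$, $L=4$, $W_Q=W_K=I$ (so the value map is the identity), and rows $ae_1,-ae_1,ae_2,-ae_2$. Each layer returns the same configuration with $a_{n+1}=a_n\tanh\!\big(a_n^2/(2\sqrt2)\big)\le a_n^3/(2\sqrt2)$. So $\|H^{(n)}\|$, and hence any residual, decays doubly exponentially. Yet $H^{(n)\top}H^{(n)}=2a_n^2I_2$, so $\EffRank(H^{(n)})=2$ for every $n$, and the stated bound $1+e^{-c2^n}<2$ fails for every $c>0$.

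So the missing hypothesis must constrain the input, not the weights; the informal statement, which mentions only weight conditions, needs it too. Once it is imposed, the step can be proved rather than assumed. Track the non-optimized decomposition $H^{(n)}=\mathbf{1}x_n^\top+R_n$ with $x_{n+1}=x_n+R_n^\top a_n$ and $R_{n+1}=E_nR_n$. Your sandwich bound on $E_n$ depends only on $R_nW_QW_K^\top R_n^\top$, so the cubic recursion is unchanged. Since $R_n^\top a_n$ is a convex combination of rows of $R_n$, $\|x_{n+1}-x_n\|_2\le\|R_n\|_{\mathrm{op}}$. Hence
\[
\sigma_1(H^{(n)}) \;\ge\; \sqrt{L}\,\Big(\|x_0\|_2-\sum_{k<n}\|R_k\|_{\mathrm{op}}\Big)-\|R_n\|_{\mathrm{op}}.
\]
This stays bounded below whenever $\|x_0\|_2$ exceeds a constant multiple of $\|R_0\|_{\mathrm{op}}$ in the small-residual regime, i.e.\ whenever the input has a nonzero mean component dominating its residual. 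With value maps present, near-isometry is what keeps $\|x_n\|_2$ from shrinking, but it cannot create a mean that is absent. Combining this with $\EffRank(H)\le 1+\|R\|_F^2/\sigma_1(H)^2$ (Eckart--Young) then gives the doubly exponential bound.
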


The proof's key insight is that the attention matrix $A^{(\ell)}$ is row-stochastic (each row sums to 1, all entries non-negative). The product of such matrices converges to a rank-1 matrix whose rows are all identical (the common stationary distribution). In Dong et al.'s analysis, the Gram structure of the pre-softmax scores $S^{(\ell)} = Q^{(\ell)} (K^{(\ell)})^\top / \sqrt{d}$ is used to derive the specific spectral bounds that yield the doubly-exponential convergence rate, but the fundamental contraction mechanism, row-stochastic products converging to rank-1, does not depend on the scores being Gram.

\begin{remark}[Causal attention preserves literal rank at finite depth]
\label{rem:causal}
Theorem~\ref{thm:dong} concerns the \emph{effective} rank of a \emph{pure} (noncausal) self-attention stack. In causal decoding, the attention matrix $A^{(\ell)}$ is lower triangular with strictly positive diagonal entries (every token attends to itself with positive weight). Such a matrix is invertible at every finite layer, so the pure map $H \mapsto A H$ never loses \emph{literal} algebraic rank at finite depth; the meaningful causal failure mode is \emph{asymptotic effective-rank} collapse as the layer products become progressively ill-conditioned. For $L=2$, every causal row-stochastic attention matrix has the form $A = \left[\begin{smallmatrix}1&0\\a&1-a\end{smallmatrix}\right]$ with $0 \le a < 1$, and a direct induction gives $\prod_{\ell<n} A^{(\ell)} = \left[\begin{smallmatrix}1&0\\1-r_n&r_n\end{smallmatrix}\right]$ with $r_n = \prod_{\ell<n}(1-a_\ell)$; the product converges to the rank-1 matrix $\left[\begin{smallmatrix}1&0\\1&0\end{smallmatrix}\right]$ exactly when $\sum_\ell a_\ell$ diverges, and persistent mixing ($a_\ell \ge a > 0$) yields exponential effective-rank collapse. General products of row-stochastic matrices require ergodicity/scrambling hypotheses for consensus convergence \cite{leizarowitz1992infinite,wolfowitz1963products}; throughout this paper we use ``rank collapse'' to mean \emph{effective-rank} collapse, in accordance with Theorem~\ref{thm:dong}.
\end{remark}

\subsection{Why Standard Transformers Only Partially Mitigate Collapse}

Standard Transformers add residual connections ($H^{(\ell+1)} = H^{(\ell)} + A^{(\ell)} H^{(\ell)}$) and feed-forward sublayers ($H^{(\ell+1)} = H^{(\ell)} + \text{FFN}(H^{(\ell)})$). These additions prevent the \emph{exact} doubly-exponential collapse of Theorem~\ref{thm:dong} because the residual term injects rank at each layer. However, the structural pressure toward low-rank representations persists: the attention component still contracts the representation toward a low-rank subspace, and the residual/FFN must fight against this contraction at every layer. Empirically, deep Transformer layers consistently exhibit effective ranks far below the nominal hidden dimension $d$.

This is not a training artifact; it is a consequence of the row-stochastic nature of the attention matrix. After the softmax, the attention matrix $A = \softmax(S)$ is row-stochastic, and the repeated application of row-stochastic matrices (without residuals) drives representations toward a common rank-1 subspace. In the Euclidean case, the pre-softmax scores $S = QK^\top / \sqrt{d}$ happen to be a Gram matrix (since $Q$ and $K$ are linear projections of the same hidden states), and Dong et al.\ exploit this structure to derive the specific doubly-exponential rate. But the fundamental contraction is driven by row-stochasticity, not the Gram property.

\subsection{The Geometric Diagnosis}

The rank collapse theorem reveals that the problem is \emph{geometric}, not merely parametric. Adding more parameters (wider layers, more heads) does not address the geometric origin: the Euclidean inner product forces all token pairs to be measured by the \emph{same global ruler}. When this ruler is applied repeatedly across layers, diversity is lost because there is no mechanism for different token positions to experience different geometries.

A Riemannian metric $g_t$ at each token position $t$ replaces the global ruler with a \emph{local} one. The same two vectors $\mathbf{q}_i, \mathbf{k}_j$ produce different similarity scores depending on the local metrics $g_i, g_j$, because the geodesic distance
\begin{equation}
    d_g^2(\mathbf{q}_i, \mathbf{k}_j) = (\mathbf{q}_i - \mathbf{k}_j)^\top \, g_{ij} \, (\mathbf{q}_i - \mathbf{k}_j)
\end{equation}
depends on $g_{ij}$, which varies per token pair. (We use ``geodesic distance'' as shorthand for the Mahalanobis distance under the pairwise metric $g_{ij}$: since $g_{ij}$ is treated as constant for each pair, this is the straight-line distance in the inner product defined by $g_{ij}$, not a path-integrated geodesic on a curved manifold. The curvature of the underlying representational manifold is addressed separately via the curvature proxy in Phase B, Section~\ref{sec:curvature}.) The question we address in the following sections is: \emph{does this geometric modification suffice to prevent rank collapse, and can it be computed efficiently?}

\section{Riemannian Metrics: Expressivity and the Limits of the Collapse Proof}
\label{sec:theory}

\subsection{Per-Token Riemannian Metrics}

\begin{definition}[Low-rank Riemannian metric]
At each token position $t$, the \emph{local Riemannian metric} is a symmetric positive-definite matrix:
\begin{equation}
\label{eq:lowrankmetric}
    g_t \;=\; I_d + U_t U_t^\top, \qquad U_t \in \mathbb{R}^{d \times r}, \quad r \ll d
\end{equation}
where $U_t$ is generated by a learned function (MetricNet, Section~\ref{sec:metricnet}) from the token's hidden state $h_t$. The rank-$r$ perturbation $U_t U_t^\top \succeq 0$ ensures $g_t \succeq I_d \succ 0$ without any eigenvalue projection or clipping.
\end{definition}

The pairwise metric for tokens $i$ and $j$ is the symmetric combination:
\begin{equation}
\label{eq:pairwisemetric}
    g_{ij} \;=\; \frac{g_i + g_j}{2} \;=\; I_d + \frac{U_i U_i^\top + U_j U_j^\top}{2}
\end{equation}

The Riemannian attention score replaces the Euclidean dot product with the negative geodesic distance:
\begin{equation}
\label{eq:riemscore}
    s_{ij}^{\mathrm{Riem}} \;=\; -\frac{1}{\tau} \, d_g^2(\mathbf{q}_i, \mathbf{k}_j) \;=\; -\frac{1}{\tau} (\mathbf{q}_i - \mathbf{k}_j)^\top g_{ij} (\mathbf{q}_i - \mathbf{k}_j)
\end{equation}
where $\tau > 0$ is a temperature parameter.

\subsection{The Non-Gram Property: A Structural Observation}

The introduction of per-token Riemannian metrics changes the algebraic structure of the attention scores. We characterize this change precisely.

\begin{theorem}[Non-Gram Property of Riemannian Attention]
\label{thm:nongram}
Consider the Riemannian attention score $s_{ij}^{\mathrm{Riem}}$ given by Equation~\eqref{eq:riemscore}, where $g_{ij}$ depends on \emph{both} $U_i$ and $U_j$ via Equation~\eqref{eq:pairwisemetric}. We ask whether there exist fixed \emph{functional} maps $\Phi, \Psi$, independent of the specific input, such that $s_{ij}^{\mathrm{Riem}} = \boldsymbol{\phi}_i^\top \boldsymbol{\psi}_j$ holds \emph{for all} inputs, with $\boldsymbol{\phi}_i = \Phi(\mathbf{q}_i, U_i) \in \mathbb{R}^{d'}$ and $\boldsymbol{\psi}_j = \Psi(\mathbf{k}_j, U_j) \in \mathbb{R}^{d'}$. If the metric map $t \mapsto U_t$ is non-constant so that the family $\{U_t U_t^\top\}_t$ spans a subspace of $\mathrm{Sym}(d)$ of dimension at least $2$ (in particular, whenever the $U_t$ are not all identical), then no such universal factorization exists with $d' = O(d)$, i.e.\ the same dimensionality as the standard attention factorization $S^{\mathrm{Euc}} = Q K^\top$ with $Q, K \in \mathbb{R}^{L \times d}$. The minimum factorization dimension satisfies
\begin{equation}
\label{eq:nongrambound}
    d' \;\geq\; \min\!\Big\{\dim \operatorname{span}\{U_t U_t^\top\}_t,\;\; \tfrac{d(d+1)}{2}\Big\},
\end{equation}
which equals $d(d+1)/2 = \Omega(d^2)$ whenever the $\{U_t U_t^\top\}_t$ span the full space of symmetric matrices (e.g., for generic $U_t$ with $L \geq d(d+1)/2$). In particular, $S^{\mathrm{Riem}}$ does not admit the $O(d)$-dimensional factorization $S^{\mathrm{Euc}} = Q K^\top$ that appears in the standard attention mechanism. (For a \emph{specific} $L \times L$ score matrix with fixed inputs, the minimum separable rank is at most $L$; the $\Omega(d^2)$ lower bound is a statement about the \emph{functional} form and is meaningful in the regime $L = \Omega(d^2)$. Without the $d' = O(d)$ constraint the functional statement would be vacuous: the vectorization identity $\operatorname{vec}(A)^\top \operatorname{vec}(B)$ separates any bilinear-in-$(i,j)$ term with $d' = d^2$.)
\end{theorem}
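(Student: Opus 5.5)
Expanding the score gives the natural starting point. With $M_{ij} = (U_iU_i^\top + U_jU_j^\top)/2$ and $\Delta_{ij} = \mathbf{q}_i - \mathbf{k}_j$,
\begin{equation*}
-\tau\, s_{ij}^{\mathrm{Riem}} \;=\; \|\Delta_{ij}\|^2 \;+\; \tfrac12\,\Delta_{ij}^\top U_iU_i^\top \Delta_{ij} \;+\; \tfrac12\,\Delta_{ij}^\top U_jU_j^\top \Delta_{ij}.
\end{equation*}
The Euclidean part $\|\mathbf{q}_i\|^2 - 2\mathbf{q}_i^\top\mathbf{k}_j + \|\mathbf{k}_j\|^2$ is separable with dimension $d+2$. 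For the metric part, I would use $\Delta^\top G \Delta = \langle G, \Delta\Delta^\top\rangle_F$ and $\Delta\Delta^\top = \mathbf{q}_i\mathbf{q}_i^\top - \mathbf{q}_i\mathbf{k}_j^\top - \mathbf{k}_j\mathbf{q}_i^\top + \mathbf{k}_j\mathbf{k}_j^\top$. The cross terms that cannot be absorbed into pure row or column features are then
\[
\langle U_iU_i^\top,\ \mathbf{k}_j\mathbf{k}_j^\top\rangle_F \;-\; 2\,\mathbf{q}_i^\top U_iU_i^\top\mathbf{k}_j \;+\; \langle U_jU_j^\top,\ \mathbf{q}_i\mathbf{q}_i^\top\rangle_F \;-\; 2\,\mathbf{q}_i^\top U_jU_j^\top\mathbf{k}_j .
\]
The upper bound is then immediate. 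Vectorizing the symmetric matrices over a basis of $\mathrm{Sym}(d)$ gives an explicit factorization of dimension $O(d^2)$. This confirms that the question is nontrivial exactly in the regime the statement describes.

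For the lower bound \eqref{eq:nongrambound}, I would isolate one term by choosing inputs. The term $\langle U_iU_i^\top, \mathbf{k}_j\mathbf{k}_j^\top\rangle_F$ is the natural candidate, since its row side is the metric and its column side is the key. Because $\Phi,\Psi$ must work for all inputs, I can fix every query to $\mathbf{q}_i = 0$. The score then reduces to
\[
-\tau s_{ij} \;=\; \|\mathbf{k}_j\|^2 \;+\; \tfrac12\langle U_iU_i^\top,\mathbf{k}_j\mathbf{k}_j^\top\rangle_F \;+\; \tfrac12\langle U_jU_j^\top,\mathbf{k}_j\mathbf{k}_j^\top\rangle_F .
\]
The first and last terms depend only on $j$ and cost at most one extra dimension, so $\Omega(d^2)$ conclusions are unaffected.

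Now take row indices $i=1,\dots,m$ with $U_iU_i^\top = G_i$ ranging over a maximal linearly independent set of the available metric matrices. Take column indices $j=1,\dots,n$ with $\mathbf{k}_j\mathbf{k}_j^\top$ ranging over rank-one matrices spanning $\mathrm{Sym}(d)$; for example $e_a$ and $e_a + e_b$ give a basis. The key subtlety is that column $j$ also carries its own $U_j$. In the functional setting I can keep the index sets disjoint: the statement is about a universal $\Phi,\Psi$ evaluated on arbitrary token configurations with $L$ large, and $\Phi(\mathbf{q}_i,U_i)$ depends only on row-$i$ data. So the key tokens may be assigned, say, $U_j = 0$, and the query tokens the chosen $G_i$.

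The resulting $m\times n$ score block is then a separable-rank-one adjustment of the matrix $\big[\langle G_i, \mathbf{k}_j\mathbf{k}_j^\top\rangle_F\big]_{ij}$. That matrix factors as $A B^\top$, where the rows of $A$ are the coordinates of the $G_i$ and $B$ is the invertible change of basis on $\mathrm{Sym}(d)$. Its rank is therefore $\dim\operatorname{span}\{G_i\}$, capped at $d(d+1)/2$. Any factorization $\boldsymbol{\phi}_i^\top\boldsymbol{\psi}_j$ forces $d' \ge$ this rank minus $O(1)$. This yields the bound up to the additive constant, which I would either absorb into the statement or remove by subtracting the $j$-only terms before counting.

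The main obstacle is making the ``for all inputs'' quantifier honest. The metric map $t\mapsto U_t$ is produced by MetricNet from $h_t$, and $\mathbf{q}_i$, $\mathbf{k}_j$, $U_t$ are not independent free variables. I would state explicitly that $(\mathbf{q}_t, \mathbf{k}_t, U_t)$ are treated as independently realizable. If they are not, the argument still runs provided the realizable set contains a configuration with $\mathbf{q}_i = 0$ (or any fixed query), metrics spanning the stated subspace, and keys whose outer products span $\mathrm{Sym}(d)$. That is the hypothesis the span condition is meant to encode. The remaining check is that the $j$-only and $i$-only terms contribute rank at most $2$, which is routine.
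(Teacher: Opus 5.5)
Your proof is correct, and its core is the same as the paper's. Both isolate $\mathbf{k}_j^\top U_iU_i^\top\mathbf{k}_j$, where the row side carries a metric from a family spanning (part of) $\mathrm{Sym}(d)$ and the column side a rank-one $\mathbf{k}_j\mathbf{k}_j^\top$, and both rely on rank-one symmetric matrices spanning $\mathrm{Sym}(d)$. You differ in how the bound is extracted. The paper argues informally that $\boldsymbol{\psi}_j$ ``must encode the entire quadratic profile'' of $\mathbf{k}_j$. It treats that term in isolation, which does not by itself exclude partial cancellation against the other terms of the score. (It also records the mirrored term $\mathbf{q}_i^\top U_jU_j^\top\mathbf{q}_i$ and the homogeneous edge case.) Your restriction to the slice $\mathbf{q}_i=0$, $U_j=0$ removes every competing term except $j$-only ones. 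Bounding $d'$ by the rank of an explicit score block $AB^\top$ with $B$ invertible then turns the paper's assertion into an actual proof. Since the identity must hold pair by pair, your row and column inputs need not even come from one sequence, so no largeness of $L$ is needed.

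Two small corrections. First, subtracting the $j$-only terms does not remove your additive constant, because $\boldsymbol{\phi}_i^\top\boldsymbol{\psi}_j - c_j$ is itself a factorization of dimension $d'+1$. Instead, keep $\|\mathbf{k}_j\|^2=\langle I,\mathbf{k}_j\mathbf{k}_j^\top\rangle_F$ inside the block. With $U_j=0$ the block is then exactly $\big[\langle I+\tfrac12 G_i,\mathbf{k}_j\mathbf{k}_j^\top\rangle_F\big]$. Adding one row token with $U_i=0$ puts $I$ in the row span, so in the universal setting the rank is exactly $d(d+1)/2$.

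Second, neither your argument nor the paper's supports the clause that merely non-identical $U_t$ rule out $d'=O(d)$. Run your own vectorization step with a basis of $W=\operatorname{span}\{U_tU_t^\top\}_t$: it yields an explicit factorization of dimension $2\dim W+2d+2$. So the $O(d)$ obstruction genuinely requires $\dim W=\omega(d)$, as in the universal case $U\in\mathbb{R}^{d\times r}$ that the paper's proof actually invokes.
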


\begin{proof}
We expand the Riemannian score. Substituting Equation~\eqref{eq:pairwisemetric} into Equation~\eqref{eq:riemscore}:
\begin{align}
    s_{ij}^{\mathrm{Riem}} &= -\frac{1}{\tau} \left[ (\mathbf{q}_i - \mathbf{k}_j)^\top \!\left(I_d + \frac{U_i U_i^\top + U_j U_j^\top}{2}\right)\!(\mathbf{q}_i - \mathbf{k}_j) \right] \notag \\
    &= -\frac{1}{\tau}\Big[ \underbrace{\|\mathbf{q}_i - \mathbf{k}_j\|^2}_{\text{Euclidean}} + \underbrace{\tfrac{1}{2}\|U_i^\top(\mathbf{q}_i - \mathbf{k}_j)\|^2 + \tfrac{1}{2}\|U_j^\top(\mathbf{q}_i - \mathbf{k}_j)\|^2}_{\text{metric corrections}} \Big]
    \label{eq:expanded}
\end{align}

Expand the first metric correction term:
\begin{equation}
    \|U_i^\top(\mathbf{q}_i - \mathbf{k}_j)\|^2 = \|U_i^\top \mathbf{q}_i\|^2 + \|U_i^\top \mathbf{k}_j\|^2 - 2\,(U_i^\top \mathbf{q}_i)^\top (U_i^\top \mathbf{k}_j)
    \label{eq:metrici}
\end{equation}

The cross term $(U_i^\top \mathbf{q}_i)^\top (U_i^\top \mathbf{k}_j) = \mathbf{q}_i^\top U_i U_i^\top \mathbf{k}_j$ depends on $U_i$ (token $i$'s metric) applied to $\mathbf{k}_j$ (token $j$'s key). Similarly, the second correction yields a cross term $\mathbf{q}_i^\top U_j U_j^\top \mathbf{k}_j$ depending on $U_j$ (token $j$'s metric) applied to $\mathbf{q}_i$ (token $i$'s query). These two cross terms are in fact \emph{separable}: $\mathbf{q}_i^\top U_i U_i^\top \mathbf{k}_j = (U_i U_i^\top \mathbf{q}_i)^\top \mathbf{k}_j$ and $\mathbf{q}_i^\top U_j U_j^\top \mathbf{k}_j = \mathbf{q}_i^\top (U_j U_j^\top \mathbf{k}_j)$, each splitting into an $i$-only factor and a $j$-only factor of dimension $d$. The genuine obstruction comes from the \emph{squared} sub-terms, which are quadratic in one token's vector under the other token's metric.

Suppose for contradiction that $s_{ij}^{\mathrm{Riem}} = \boldsymbol{\phi}_i^\top \boldsymbol{\psi}_j$ with $\boldsymbol{\phi}_i, \boldsymbol{\psi}_j \in \mathbb{R}^{d'}$, $d' = O(d)$, where $\boldsymbol{\phi}_i$ depends only on $(\mathbf{q}_i, U_i)$ and $\boldsymbol{\psi}_j$ depends only on $(\mathbf{k}_j, U_j)$. Consider the sub-term $\|U_i^\top \mathbf{k}_j\|^2 = \mathbf{k}_j^\top (U_i U_i^\top) \mathbf{k}_j$ from Equation~\eqref{eq:metrici}. This is a \emph{quadratic form} in $\mathbf{k}_j$ whose matrix $A_i \coloneqq U_i U_i^\top \in \mathbb{R}^{d \times d}$ depends on $i$ (through $U_i$). To absorb this term into $\boldsymbol{\phi}_i^\top \boldsymbol{\psi}_j$, the $j$-factor $\boldsymbol{\psi}_j$ must encode the \emph{entire} quadratic profile of $\mathbf{k}_j$, i.e.\ all monomials $\mathbf{k}_j^{(a)} \mathbf{k}_j^{(b)}$ for $1 \leq a, b \leq d$, while the $i$-factor $\boldsymbol{\phi}_i$ must encode the corresponding entry $[A_i]_{ab}$. Although each individual $A_i = U_i U_i^\top$ has only $dr$ free parameters (the entries of $U_i$), the family $\{A_i\}$ as $U_i$ ranges over $\mathbb{R}^{d \times r}$ (with $r \geq 1$) has linear span equal to the full space $\mathrm{Sym}(d)$ of symmetric matrices (dimension $d(d+1)/2$), since rank-$1$ symmetric matrices are contained in the image and they span $\mathrm{Sym}(d)$. Consequently, any \emph{universal} separable representation of $\mathbf{k}_j^\top A_i \mathbf{k}_j$, one that holds for all $(A_i, \mathbf{k}_j)$ in the family, requires $d' \geq d(d+1)/2 = \Omega(d^2)$. With $d' = O(d)$ this is impossible: the $d$-dimensional factor $\boldsymbol{\psi}_j$ can carry at most $d$ degrees of freedom of $\mathbf{k}_j$'s quadratic profile, but the quadratic profile of a $d$-vector lives in a $\Theta(d^2)$-dimensional space. Symmetrically, the sub-term $\|U_j^\top \mathbf{q}_i\|^2 = \mathbf{q}_i^\top (U_j U_j^\top) \mathbf{q}_i$ is quadratic in $\mathbf{q}_i$ under $U_j$'s metric and imposes the same $\Omega(d^2)$ lower bound on $d'$. Since both obstructions vanish only when all $U_t$ are identical (so that $A_i$ is the same for every $i$ and can be absorbed into a global inner product), metric heterogeneity is the source of the non-Gram property. (Equivalently, the vectorization identity $\mathbf{k}_j^\top A_i \mathbf{k}_j = \operatorname{vec}(A_i)^\top \operatorname{vec}(\mathbf{k}_j \mathbf{k}_j^\top)$ \emph{does} separate the term, but at the cost of $d' = d^2$, confirming that the $\Omega(d^2)$ lower bound is tight and that the $d' = O(d)$ factorization demanded by the theorem statement is impossible.)

\medskip
\noindent\textit{Edge case:} When all $U_t$ are identical ($U_i = U_j = U$ for all $i, j$), the metric becomes global: $g_{ij} = I + UU^\top \eqqcolon G$ for all pairs. The score reduces to $s_{ij} = -(\mathbf{q}_i - \mathbf{k}_j)^\top G (\mathbf{q}_i - \mathbf{k}_j)/\tau$, which \emph{is} a Gram matrix under the modified inner product $\langle \cdot, \cdot \rangle_G$. In this degenerate case, the collapse mechanism of Theorem~\ref{thm:dong} applies (with $G$ replacing $I$). \emph{Metric heterogeneity is a prerequisite for any geometric anti-collapse effect: homogeneous metrics simply reparameterize the Euclidean case.}
\end{proof}

\begin{corollary}
\label{cor:contraction}
The attention matrix $A^{\mathrm{Riem}} = \softmax(S^{\mathrm{Riem}})$ (recall $S^{\mathrm{Riem}}$ already absorbs the $1/\tau$ scaling from Equation~\eqref{eq:riemscore}) is not a product of the form $\softmax(Q' K'^\top / \sqrt{d'})$ for any $Q', K'$ \emph{with $d' = O(d)$} when the metric correction is non-degenerate. The obstruction is contingent: if the metric correction vanishes or is row-constant, e.g., when every metric direction is orthogonal to the active query--key difference subspace $D = \spn\{\mathbf{q}_i - \mathbf{k}_j\}$ (Section~\ref{sec:counterexample}), then $S^{\mathrm{Riem}}$ coincides with the Euclidean distance scores (up to row/column constants) and the $O(d)$-dimensional factorization is recovered. However, $A^{\mathrm{Riem}}$ remains row-stochastic: each row is a probability distribution. The core collapse mechanism of Theorem~\ref{thm:dong}, products of row-stochastic matrices converging to rank-1, does \emph{not} require the pre-softmax scores to be Gram; it requires only that $A^{(\ell)}$ is row-stochastic at each layer. The non-Gram property of $S^{\mathrm{Riem}}$ means that the specific spectral bounds of Dong et al.'s analysis (which exploit the Gram structure of $S^{\mathrm{Euc}}$ to bound the convergence rate) do not carry over, but it does \emph{not} preclude collapse through the same row-stochastic mechanism. Whether Riemannian attention actually resists this contraction is the subject of the perturbation analysis that follows (Section~\ref{sec:perturbation}) and the open conjectures of Section~\ref{sec:predictions}.
\end{corollary}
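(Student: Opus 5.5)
The corollary has three parts, and I will prove them in order: the non-factorization claim, the contingent recovery of a factorization, and the persistence of row-stochasticity.

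\textbf{Step 1: non-factorization.} The plan is to reduce the statement about $A^{\mathrm{Riem}}$ to Theorem~\ref{thm:nongram}. Suppose $A^{\mathrm{Riem}} = \softmax(Q'K'^\top/\sqrt{d'})$ row by row. Softmax determines its argument only up to an additive constant in each row. So from $\softmax(S^{\mathrm{Riem}}) = \softmax(Q'K'^\top/\sqrt{d'})$ we get
\[
S^{\mathrm{Riem}}_{ij} = \boldsymbol{\phi}_i^\top\boldsymbol{\psi}_j/\sqrt{d'} + c_i .
\]
The row constant $c_i$ depends only on $i$. It can be absorbed into a separable form by appending the coordinate $c_i$ to $\boldsymbol{\phi}_i$ and the coordinate $1$ to $\boldsymbol{\psi}_j$, at a cost of one extra dimension. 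This yields a separable factorization of $S^{\mathrm{Riem}}$ of dimension $d'+1 = O(d)$. That contradicts the bound \eqref{eq:nongrambound} of Theorem~\ref{thm:nongram} whenever the span of the metric corrections has dimension larger than $O(d)$; this is what ``non-degenerate'' should mean here.

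One subtlety needs attention: the factors $Q',K'$ must be functional, that is, $Q'_i$ may depend only on token $i$'s data, exactly as in the theorem. I will state this explicitly, because otherwise any fixed $L\times L$ matrix trivially factors with dimension $L$.

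\textbf{Step 2: contingent recovery.} Assume every column of every $U_t$ is orthogonal to $D=\spn\{\mathbf{q}_i-\mathbf{k}_j\}$. Then $U_t^\top(\mathbf{q}_i-\mathbf{k}_j)=0$, so both correction terms in \eqref{eq:expanded} vanish and $S^{\mathrm{Riem}}_{ij} = -\|\mathbf{q}_i-\mathbf{k}_j\|^2/\tau$. Expanding the square gives
\[
S^{\mathrm{Riem}}_{ij} = \bigl(2\mathbf{q}_i^\top\mathbf{k}_j - \|\mathbf{q}_i\|^2 - \|\mathbf{k}_j\|^2\bigr)/\tau .
\]
This is a rank-$d$ Gram term plus a row term and a column term. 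Augmenting the factors to $[\mathbf{q}_i, -\|\mathbf{q}_i\|^2, 1]$ and $[2\mathbf{k}_j, 1, -\|\mathbf{k}_j\|^2]$ gives an exact factorization of dimension $d+2$. The row-constant case is handled the same way as in Step~1: the correction is absorbed by softmax or by one extra coordinate.

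\textbf{Step 3: row-stochasticity.} Each entry of $A^{\mathrm{Riem}}$ is $\exp(s_{ij})/\sum_k \exp(s_{ik})$, which is positive, and each row sums to $1$. The remaining assertions, that the contraction mechanism of Theorem~\ref{thm:dong} depends only on row-stochasticity and does not preclude collapse, are interpretive. I will support them by pointing to the discussion after Theorem~\ref{thm:dong} and to Remark~\ref{rem:causal}. The cleanest concrete support is an example: if the metrics make all scores in a row equal up to a row constant, then $A^{\mathrm{Riem}}$ is uniform and collapses in one step. This shows that non-Gram scores give no protection by themselves.

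The main obstacle is Step~1. Theorem~\ref{thm:nongram} gives a lower bound in terms of $\dim\spn\{U_tU_t^\top\}$, but the squared cross terms could partially cancel against other terms in $S^{\mathrm{Riem}}$, so I need the bound to apply to $S^{\mathrm{Riem}}$ modulo row constants. I would first check that the proof of Theorem~\ref{thm:nongram} isolates the term $\mathbf{k}_j^\top A_i\mathbf{k}_j$. That term varies with $j$, so it is not a row constant. Its coefficient matrix depends on $i$, so the same monomial-counting argument should survive quotienting by functions of $i$ alone.
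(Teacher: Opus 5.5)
Your Steps~1 and~2 follow the paper's own reasoning: Theorem~\ref{thm:nongram} gives the obstruction, and the orthogonal-subspace observation of Section~\ref{sec:counterexample} gives its contingency. Absorbing softmax's row-shift invariance into one extra coordinate is more careful than the paper, which passes from $A^{\mathrm{Riem}}$ to $S^{\mathrm{Riem}}$ silently.

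Two small repairs there. First, in the functional reading the appended coordinate $c_i$ must depend only on token $i$'s data. It does, because $S^{\mathrm{Riem}}_{ij} - \boldsymbol{\phi}_i^\top\boldsymbol{\psi}_j/\sqrt{d'}$ is a function of the pair that must be constant in $j$ for every sequence, and the keys can be chosen freely. Second, your ``main obstacle'' is already discharged. The extra coordinate gives an exact $(d'+1)$-dimensional factorization of $S^{\mathrm{Riem}}$ itself, to which \eqref{eq:nongrambound} applies directly. If you would rather not rely on the theorem's single-term argument, set $\mathbf{q}_i = 0$ and fix $U_j = V_0$. The factorization then restricts to one of $-\tfrac{1}{\tau}\mathbf{k}_j^\top\bigl(I + \tfrac12 V_0V_0^\top + \tfrac12 U_iU_i^\top\bigr)\mathbf{k}_j$. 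Its separable rank is at least $\dim\spn\{U_tU_t^\top\} - 1$, because the matrices $\mathbf{k}\mathbf{k}^\top$ span $\mathrm{Sym}(d)$. Your reading of ``non-degenerate'' as superlinear span dimension is also forced. With $U_t = \lambda_t\mathbf{u}$ for a fixed direction $\mathbf{u}$, the correction is neither zero nor row-constant, yet $S^{\mathrm{Riem}}$ still factors with $d + O(1)$ coordinates.

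The genuine gap is Step~3, where the paper itself also offers only discussion. Your example does not support the claim. Scores that are constant along each row give $S^{\mathrm{Riem}} = \mathbf{c}\mathbf{1}^\top$, a rank-one matrix. That is exactly the situation in which the corollary says the factorization is recovered, and the one-step collapse bypasses the product mechanism altogether. Nor can you simply cite the assertion that collapse ``requires only'' row-stochasticity, because it is false. $I$ and permutation matrices are row-stochastic, Proposition~\ref{prop:margin} exhibits non-collapsing softmax products, and Remark~\ref{rem:causal} itself says scrambling hypotheses are needed.

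What is true, and what you should prove instead, is that row-stochasticity plus uniform positivity forces collapse regardless of the score structure. Suppose every row of scores has oscillation at most $B$. Then $A_{ij} = 1/\sum_k e^{s_{ik}-s_{ij}} \ge e^{-B}/L$. Write $\operatorname{osc}(x) = \max_j x_j - \min_j x_j$. For each column $x$ of $H$ and rows $i,i'$,
$|(Ax)_i - (Ax)_{i'}| \le \bigl(1 - \sum_j \min(A_{ij},A_{i'j})\bigr)\operatorname{osc}(x) \le (1-e^{-B})\operatorname{osc}(x)$.
Iterating, $H^{(n)}$ approaches a matrix with identical rows geometrically, so $\sigma_2(H^{(n)}) \to 0$.

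In a pure Riemannian stack with uniformly bounded layer parameters, such a $B$ exists automatically. Each row of $H^{(\ell)}$ is a convex combination of rows of $H^{(0)}$, so $\mathbf{q}_i$, $\mathbf{k}_j$ and $U_t = \mathrm{MetricNet}(h_t)$ stay in a compact set. This is consistent with Proposition~\ref{prop:margin}, whose summability condition forces $\gamma_\ell \to \infty$ and hence unbounded scores. Nothing in this argument uses the Gram property, so it covers genuinely non-degenerate corrections as well. That is what ``non-Gram does not preclude collapse'' actually requires.
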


\begin{remark}
Theorem~\ref{thm:nongram} establishes that Riemannian attention scores have a different algebraic structure than standard attention scores: they cannot be written as $QK^\top$ with $O(d)$-dimensional factors. This is a structural characterization of the architecture, \emph{not} a claim about rank preservation. The argument ``the Dong et al.\ proof assumes a Gram structure; Riemannian scores are non-Gram; therefore the proof does not apply'' is correct but limited: it addresses one specific proof, not the underlying collapse phenomenon. The same collapse could occur via the same mechanism (row-stochastic product convergence) or a different one. A full proof of rank preservation, or a proof that collapse still occurs in the Riemannian case, would require analyzing the spectral properties of $A^{\mathrm{Riem}}$, as we begin to do in the next subsection.
\end{remark}

\begin{conjecture}[Rank Preservation under Heterogeneous Metrics]
\label{conj:rankpreserve}
There exist conditions on metric diversity, quantified by the cross-token variance of $U_t$ denoted $\sigma_U^2 = \Var_t[\|U_t\|_F^2]$, on metric rank $r$, \emph{and} on the directional overlap between the metric factors and the active query--key difference subspace $D = \spn\{\mathbf{q}_i - \mathbf{k}_j\}$, such that for $\sigma_U^2 > 0$, $r \geq r_{\min}(d, L)$, and metric directions that are not orthogonal to $D$ (equivalently, a metric correction $G_{ij}$ that is not row-constant; Section~\ref{sec:counterexample}), the effective rank of a pure Riemannian attention stack of depth $n$ satisfies
\begin{equation}
    \EffRank(H^{(n)}) \;\geq\; f(r, \sigma_U^2) > 1
\end{equation}
for all $n$, in contrast to the doubly-exponential decay of Theorem~\ref{thm:dong}. The function $f$ is increasing in $r$, in $\sigma_U^2$, and in the contrastive projected metric activity introduced in Section~\ref{sec:counterexample}. The directional condition is essential: without it the statement is false, since metric strength and diversity can be made arbitrarily large while the Riemannian correction is identically zero (Section~\ref{sec:counterexample}).
\end{conjecture}

\paragraph{Supporting argument.} The collapse in Theorem~\ref{thm:dong} occurs because the product of row-stochastic attention matrices converges to a rank-1 matrix whose rows are all identical (a common stationary distribution). With heterogeneous metrics, each token pair $(i, j)$ experiences a \emph{different} effective distance (due to $g_{ij}$ varying), so the attention weights $A_{ij}^{\mathrm{Riem}}$ are perturbed by a token-pair-specific amount relative to the homogeneous-metric case. The key question is whether this perturbation is sufficient to prevent the rows from converging to a common stationary distribution, i.e.\ whether the perturbation breaks the ergodicity of the Markov chain defined by $A^{\mathrm{Riem}}$. The magnitude of the perturbation is \emph{not} controlled by $\sigma_U^2$ or $r$ alone: metric factors confined to $D^\perp$ contribute nothing to the scores, and a correction that is constant across keys in a given row is canceled by softmax (Section~\ref{sec:counterexample}). What matters is the \emph{contrastive} activity of the metric, i.e.\ how much the correction $G_{ij}$ varies across competing keys $j$ for each query $i$, projected onto the directions in which queries and keys actually differ. A formal proof would proceed by bounding the mixing time or the spectral gap of $A^{\mathrm{Riem}}$ as a function of this contrastive projected metric activity and of $r$, which is precisely the open problem identified in Conjecture~\ref{conj:rankstrong}. We emphasize that the non-Gram property (Theorem~\ref{thm:nongram}) does \emph{not} address this question: it concerns the algebraic factorization of the pre-softmax scores, not the spectral properties of the post-softmax row-stochastic matrix.

\subsection{Perturbation Analysis: Where Current Tools Apply and Where They Fail}
\label{sec:perturbation}

The non-Gram property (Theorem~\ref{thm:nongram}) tells us that Riemannian attention scores are structurally different from standard attention scores. It does \emph{not} tell us whether rank collapse occurs. To approach that question, we analyze the spectral deviation between $A^{\mathrm{Riem}}$ and $A^{\mathrm{Euc}}$ as $\bar U^2$ varies. The analysis reveals a fundamental tension: the perturbation bound is informative only when the geometric correction is \emph{too weak to prevent collapse}; when the correction is large enough to potentially matter, the bound becomes vacuous and nothing can be concluded.

\paragraph{Decomposition into Euclidean and metric parts.}
Write $S^{\mathrm{Riem}} = S^{\mathrm{Euc}} + \Delta S$, where $S^{\mathrm{Euc}}_{ij} = -\|\mathbf{q}_i - \mathbf{k}_j\|^2/\tau$ is the (negative) Euclidean distance and $\Delta S$ collects the metric corrections. From Equation~\eqref{eq:expanded}:
\begin{equation}
\label{eq:deltas}
    \Delta S_{ij} \;=\; -\frac{1}{2\tau}\left[\|U_i^\top(\mathbf{q}_i - \mathbf{k}_j)\|^2 + \|U_j^\top(\mathbf{q}_i - \mathbf{k}_j)\|^2\right].
\end{equation}
Each term is non-positive, so $\Delta S$ acts as a \emph{pairwise penalty}: token pairs whose difference lies in the column space of $U_i$ or $U_j$ receive reduced attention scores. Note that $\Delta S$ is itself non-separable (it depends on both $U_i$ and $U_j$ for the pair $(i,j)$), which is the source of the non-Gram property.

\begin{proposition}[Frobenius Perturbation Bound]
\label{prop:perturb}
Let $\bar U^2 \coloneqq \max_t \|U_t\|_F^2$ and $M \coloneqq \max_{i,j}\|\mathbf{q}_i - \mathbf{k}_j\|^2$ over the sequence. Then
\begin{equation}
\label{eq:perturb}
    \|\Delta S\|_F \;\leq\; \frac{L}{\tau}\, \bar U^2\, M.
\end{equation}
Consequently, when $\bar U^2 = O(\tau/L)$ (assuming $M = O(1)$, i.e.\ normalized query/key vectors), the Riemannian attention matrix is a small perturbation of the Euclidean one and standard perturbation theory applies; when $\bar U^2 \gg \tau/L$, the perturbation is large and the Euclidean spectral analysis no longer transfers.
\end{proposition}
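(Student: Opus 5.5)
The plan is to bound each entry of $\Delta S$ uniformly and then pass to the Frobenius norm by counting entries. Starting from Equation~\eqref{eq:deltas}, I would bound the two non-positive terms separately. For a fixed pair $(i,j)$, write $\mathbf{v}_{ij} = \mathbf{q}_i - \mathbf{k}_j$. Then
\[
\|U_i^\top \mathbf{v}_{ij}\|^2 \le \|U_i\|_{\mathrm{op}}^2 \|\mathbf{v}_{ij}\|^2 \le \|U_i\|_F^2\, \|\mathbf{v}_{ij}\|^2 \le \bar U^2 M,
\]
using $\|U\|_{\mathrm{op}} \le \|U\|_F$ together with the definitions of $\bar U^2$ and $M$. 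The same bound holds for $\|U_j^\top \mathbf{v}_{ij}\|^2$. Since both terms carry the factor $\tfrac{1}{2\tau}$, adding them gives the entrywise estimate
\[
|\Delta S_{ij}| \le \frac{1}{2\tau}\bigl(\bar U^2 M + \bar U^2 M\bigr) = \frac{\bar U^2 M}{\tau}.
\]

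The second step converts this to the Frobenius norm. The matrix $\Delta S$ is $L \times L$, so it has $L^2$ entries, and
\[
\|\Delta S\|_F = \Bigl(\sum_{i,j} \Delta S_{ij}^2\Bigr)^{1/2} \le \bigl(L^2 \max_{i,j} \Delta S_{ij}^2\bigr)^{1/2} = L \max_{i,j}|\Delta S_{ij}| \le \frac{L}{\tau}\bar U^2 M.
\]
This is exactly Equation~\eqref{eq:perturb}. The estimate is crude, but it is all the statement claims. A sharper Cauchy--Schwarz argument over the rows is unnecessary here.

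For the ``consequently'' clause, I would set $M = O(1)$ and $\bar U^2 = O(\tau/L)$ in the bound, which gives $\|\Delta S\|_F = O(1)$, or as small a constant as desired by shrinking the implied constant in $\bar U^2$. The key fact needed to transfer this to the attention matrices is that row-wise softmax is $1$-Lipschitz from $\ell_\infty$ to $\ell_1$; one can see this from the mean value theorem, since the Jacobian $\mathrm{diag}(p) - pp^\top$ has $\ell_\infty \to \ell_1$ norm at most $2$. Applied row by row, this yields $\|A^{\mathrm{Riem}} - A^{\mathrm{Euc}}\|_F \le C\|\Delta S\|_F$ up to a dimension-free constant. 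Weyl's inequality or Davis--Kahan then shows that the spectra stay close. When $\bar U^2 \gg \tau/L$, the bound exceeds $O(1)$. Because both softmax outputs are row-stochastic with entries in $[0,1]$, the resulting spectral estimate is no better than the trivial one, so the Euclidean analysis does not transfer. This second regime is a statement about the bound becoming uninformative, not a theorem, and I would present it that way.

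The only delicate step is the informal ``standard perturbation theory applies'' claim. Making it precise requires the softmax Lipschitz constant; the Frobenius inequality itself follows directly from the two steps above. I would state the softmax transfer as a remark rather than prove sharp constants.
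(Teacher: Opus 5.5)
Your proof of the inequality is the paper's proof: the entrywise bound $|\Delta S_{ij}| \le \frac{1}{2\tau}(\bar U^2 M + \bar U^2 M) = \bar U^2 M/\tau$ from $\|U_t^\top v\|^2 \le \|U_t\|_F^2\|v\|^2$, followed by summing over the $L^2$ entries. Your softmax-Lipschitz-plus-Weyl reading of the ``consequently'' clause also matches the paper's discussion right after the proposition, with one difference: the paper takes the row-wise Lipschitz constant from the $\ell_2$ spectral norm $\le \max_i p_i \le 1$ of $\operatorname{diag}(p) - pp^\top$, while you go through $\ell_\infty \to \ell_1$. On that route, your stated Jacobian bound of $2$ only gives constant $2$, not $1$; the sharp value is in fact $1$, and any dimension-free constant suffices.
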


\begin{proof}
For each $(i,j)$, by sub-multiplicativity and $\|U_t^\top v\|^2 \leq \|U_t\|_F^2\|v\|^2 \leq \bar U^2 M$:
\[
|\Delta S_{ij}| \;\leq\; \frac{1}{2\tau}\left(\bar U^2 M + \bar U^2 M\right) = \frac{\bar U^2 M}{\tau}.
\]
Summing over $L^2$ entries gives $\|\Delta S\|_F^2 \leq L^2 (\bar U^2 M/\tau)^2$, hence $\|\Delta S\|_F \leq L\bar U^2 M/\tau$.
\end{proof}

\paragraph{Implications for the spectrum of $A^{\mathrm{Riem}}$.}
Since $S^{\mathrm{Riem}}$ already absorbs the $1/\tau$ scaling (Equation~\eqref{eq:riemscore}), the row-wise softmax Jacobian $\operatorname{diag}(p) - pp^\top$ is positive semidefinite with spectral norm $\leq \max_i p_i \leq 1$, so softmax is $1$-Lipschitz in Frobenius norm: $\|\softmax(S_1) - \softmax(S_2)\|_F \leq \|S_1 - S_2\|_F$ (each row is independent, and the row-wise $\ell_2$-Lipschitz constant is $\leq 1$). Combining this with $\|\cdot\|_{\mathrm{op}} \leq \|\cdot\|_F$ and Weyl's inequality, the singular values of $A^{\mathrm{Riem}}$ and $A^{\mathrm{Euc}}$ satisfy
\begin{equation}
\label{eq:weyl}
    \big|\sigma_k(A^{\mathrm{Riem}}) - \sigma_k(A^{\mathrm{Euc}})\big| \;\leq\; \|A^{\mathrm{Riem}} - A^{\mathrm{Euc}}\|_{\mathrm{op}} \;\leq\; \|A^{\mathrm{Riem}} - A^{\mathrm{Euc}}\|_F \;\leq\; \|\Delta S\|_F \;\leq\; \frac{L\, \bar U^2\, M}{\tau}.
\end{equation}
(We use the Frobenius-norm Lipschitz argument rather than a direct operator-norm bound because the row-wise softmax Jacobian does not decompose cleanly under the matrix operator norm; the Frobenius route is standard and suffices for the order-level conclusion.) This bound reveals a \emph{self-limiting structure}:
\begin{itemize}[leftmargin=*]
    \item \textbf{Sub-critical regime} ($\bar U^2 \ll \tau/L$, with $M = O(1)$): The spectra of $A^{\mathrm{Riem}}$ and $A^{\mathrm{Euc}}$ are close, so the near-degenerate spectrum that drives Euclidean collapse is preserved. The geometric correction is provably too weak to keep $\sigma_2$ bounded away from zero. Rank collapse proceeds essentially as in Theorem~\ref{thm:dong}.
    \item \textbf{Super-critical regime} ($\bar U^2 \gg \tau/L$): The bound becomes vacuous: $\|\Delta S\|_F$ can exceed $1$ (the scale of the spectrum of a row-stochastic matrix), so Weyl's inequality yields the trivial bound $|\sigma_k(A^{\mathrm{Riem}}) - \sigma_k(A^{\mathrm{Euc}})| \leq \|\Delta S\|_F = O(L\bar U^2/\tau)$, which for large $\bar U^2$ provides no information about the spectrum. Whether $\sigma_2(A^{\mathrm{Riem}})$ is bounded away from zero in this regime, i.e.\ whether collapse is actually prevented, is \emph{not addressed} by this bound.
\end{itemize}
In short, the perturbation bound tells us something only when the answer is negative (collapse still occurs). When the answer might be positive, the bound is silent. This is the central open problem of the framework.

\paragraph{What a rank-preservation proof would require.}
Conjecture~\ref{conj:rankpreserve} claims that for sufficiently diverse metrics, $\EffRank(H^{(n)}) > 1$ for all $n$. A direct proof would need to establish:
\begin{enumerate}[label=(\alph*)]
    \item The second-largest singular value $\sigma_2(A^{\mathrm{Riem}})$ is bounded away from zero by an amount depending on $\sigma_U^2$ and $r$;
    \item This bound propagates through depth: the product $\prod_{\ell} \sigma_2(A^{(\ell),\mathrm{Riem}})$ does not vanish exponentially.
\end{enumerate}
The non-Gram property (Theorem~\ref{thm:nongram}) addresses neither: it concerns the \emph{factorization} of $S$, not the \emph{spectrum} of $A$. The perturbation bound (Proposition~\ref{prop:perturb}) addresses (a) only in the regime $\bar U^2 = O(\tau/L)$, where the perturbation is too small to guarantee that $\sigma_2$ is bounded away from zero. Closing the gap to sufficiency requires controlling the spectrum of $A^{\mathrm{Riem}}$ in the \emph{large-perturbation} regime, which we now formalize.

\begin{conjecture}[Rank Preservation with Critical Regime and Directional Conditions]
\label{conj:rankstrong}
Let $D = \spn\{\mathbf{q}_i - \mathbf{k}_j : 1 \le i, j \le L\}$ denote the \emph{active query--key difference subspace} (Section~\ref{sec:counterexample}). A preliminary version of this conjecture claimed that sufficiently large metric strength $\bar U^2$ together with positive cross-token diversity $\sigma_U^2 > 0$ keeps $\sigma_2(A^{\mathrm{Riem}})$ bounded away from zero; that statement is \emph{false} as written, and Section~\ref{sec:counterexample} gives an explicit counterexample with $\bar U^2 \to \infty$ and $\sigma_U^2 > 0$ yet $\sigma_2(A^{\mathrm{Riem}}) = 0$. The corrected conjecture requires directional conditions:
\begin{itemize}[leftmargin=*]
    \item \textbf{Sub-critical regime} ($\bar U^2 < \bar U^2_{\mathrm{crit}}$ with $\bar U^2_{\mathrm{crit}} = \Theta(\tau/L)$): the Riemannian attention behaves as a perturbation of Euclidean attention (Proposition~\ref{prop:perturb}) and rank collapse proceeds essentially as in Theorem~\ref{thm:dong};
    \item \textbf{Super-critical regime} ($\bar U^2 > \bar U^2_{\mathrm{crit}}$, metric diversity $\sigma_U^2 > 0$, \emph{and} directional conditions: the metric factors overlap $D$, so that the correction $G_{ij}$ is not row-constant in any row, and the contrastive projected metric activity $G_{ij} - G_{ik}$ across competing keys is large relative to $\tau$): the attention rows are perturbed away from the Euclidean stationary distribution, $\sigma_2(A^{\mathrm{Riem}})$ is bounded away from zero by an amount depending on the contrastive projected metric activity rather than on $\bar U^2$ or $\sigma_U^2$ alone, and the effective rank of a pure Riemannian attention stack satisfies $\EffRank(H^{(n)}) \geq f(r, \sigma_U^2) > 1$ for all $n$. (Note: the anti-collapse condition is that $\sigma_2$ has a \emph{lower} bound; a large gap $\sigma_1 - \sigma_2$ would instead mean $\sigma_2 \ll \sigma_1$, i.e.\ proximity to rank-$1$, which is the collapsed regime.)
\end{itemize}
\end{conjecture}

\paragraph{Status of Conjecture~\ref{conj:rankstrong}.} The sub-critical regime follows directly from Proposition~\ref{prop:perturb} and the continuity of the spectrum in the score matrix. The naive super-critical claim, metric strength plus diversity alone, is \emph{disproved} by the orthogonal-subspace counterexample of Section~\ref{sec:counterexample}: the norms and diversity of $U_t$ do not control the overlap between the metric directions and the query--key differences that the attention scores actually compare, nor the contrast of the correction across keys. The corrected super-critical statement, with directional conditions, remains open: a proof would require controlling the spectrum of $A^{\mathrm{Riem}}$ through the contrastive quantities $G_{ij} - G_{ik}$, possibly via structured random-matrix analysis. Section~\ref{sec:counterexample} establishes a sufficient condition (permutation margins, Proposition~\ref{prop:margin}); establishing necessity, and closing the gap between the sub-critical and super-critical regimes, is Open Problem O1.

\paragraph{Summary of this subsection.} The non-Gram property (Theorem~\ref{thm:nongram}) is a structural characterization of Riemannian attention scores; it does not by itself address whether rank collapse occurs, because the core collapse mechanism (row-stochastic product convergence) does not depend on the Gram property. The perturbation analysis (Proposition~\ref{prop:perturb}) reveals a self-limiting structure: the bound is informative only in the sub-critical regime $\bar U^2 < \bar U^2_{\mathrm{crit}} = \Theta(\tau/L)$, where it confirms that collapse proceeds as in the Euclidean case; in the super-critical regime $\bar U^2 > \bar U^2_{\mathrm{crit}}$, where anti-collapse would need to occur, the bound becomes vacuous and standard perturbation tools provide no information. As the next subsection shows, the super-critical regime must additionally satisfy directional conditions, overlap of the metric factors with the active query--key difference subspace and contrastive (non-row-constant) corrections, because raw metric strength and diversity are insufficient. Closing the gap between the sub-critical regime (where collapse is proven) and the super-critical regime (where anti-collapse is conjectured under these conditions) is the central open problem identified by this framework (Open Problem O1).

\subsection{The Orthogonal-Subspace Counterexample and Necessary Conditions}
\label{sec:counterexample}

The perturbation bound of Proposition~\ref{prop:perturb} depends on $\bar U^2$, the maximum metric strength. We now show that \emph{no} bound in terms of $\bar U^2$ and the cross-token diversity $\sigma_U^2$ alone can certify anti-collapse: the metric correction can be exactly zero, with the attention matrix exactly that of Euclidean distance attention, while both quantities are arbitrarily large.

\paragraph{An explicit counterexample.} Take sequence length $L=2$, hidden dimension $d=4$, metric rank $r=1$, and any temperature $\tau > 0$. Let
\begin{equation}
    H^{(0)} = \begin{pmatrix} 1 & 0 & 0 & 0 \\ 0 & 1 & 0 & 0 \end{pmatrix}, \qquad
    W_Q = I_4, \qquad
    W_K = \begin{pmatrix} 1 & 0 & 0 & 0 \\ 2 & 1 & 0 & 0 \\ 0 & 0 & 1 & 0 \\ 0 & 0 & 0 & 1 \end{pmatrix},
\end{equation}
so that $q_1 = e_1$, $q_2 = e_2$, $k_1 = e_1$, $k_2 = 2e_1 + e_2$. Now choose genuinely different metric directions
\begin{equation}
    U_1 = M\, e_3, \qquad U_2 = 2M\, e_4, \qquad M > 0.
\end{equation}
Every query--key difference lies in $\spn\{e_1, e_2\}$: $q_1 - k_1 = 0$, $q_1 - k_2 = -e_1 - e_2$, $q_2 - k_1 = -e_1 + e_2$, $q_2 - k_2 = -2e_1$. Each difference is therefore orthogonal to both metric directions $e_3$ and $e_4$, so $U_i^\top(q_a - k_b) = 0$ for all $i, a, b$ and the metric correction $G_{ab} = \frac{1}{2}\|U_a^\top(q_a - k_b)\|^2 + \frac{1}{2}\|U_b^\top(q_a - k_b)\|^2$ vanishes identically. The squared-distance matrix is $E = \left[\begin{smallmatrix}0&2\\2&4\end{smallmatrix}\right]$, so $S^{\mathrm{Riem}} = -\frac{1}{\tau}\left[\begin{smallmatrix}0&2\\2&4\end{smallmatrix}\right]$; the second row is the first row minus the row-constant $2/\tau$, and row-wise softmax is invariant to row constants. Hence both attention rows are identical:
\begin{equation}
    A^{\mathrm{Riem}} = \begin{pmatrix} p & 1-p \\ p & 1-p \end{pmatrix}, \qquad p = \frac{1}{1 + e^{-2/\tau}},
\end{equation}
so $\rank(A^{\mathrm{Riem}}) = 1$ and $\sigma_2(A^{\mathrm{Riem}}) = 0$, and $H^{(1)} = A^{\mathrm{Riem}} H^{(0)}$ has two identical rows. Yet the metric quantities can be made arbitrarily large: $\bar U^2 = \max_t \|U_t\|_F^2 = 4M^2 \to \infty$ and $\sigma_U^2 = \Var_t[\|U_t\|_F^2] = \frac{9}{4}M^4 > 0$. For any proposed finite critical threshold scaling as $\Theta(\tau/L)$, $M$ can be chosen above it while $\sigma_2(A^{\mathrm{Riem}})$ remains exactly zero. This falsifies the super-critical implication of any conjecture stated purely in terms of $\bar U^2$ and $\sigma_U^2$ (the unqualified version of Conjecture~\ref{conj:rankstrong}).

\paragraph{The general obstruction.} Define the \emph{active query--key difference subspace}
\begin{equation}
\label{eq:activesubspace}
    D = \spn\{\mathbf{q}_i - \mathbf{k}_j : 1 \le i, j \le L\}.
\end{equation}
If every column of every $U_t$ lies in $D^\perp$, then $U_t^\top(\mathbf{q}_i - \mathbf{k}_j) = 0$ for all pairs, $G_{ij} = 0$, and the Riemannian attention matrix is exactly the Euclidean-distance attention matrix, while $\|U_t\|_F$, their variance across tokens, and their mutual directions can still be arbitrarily large and heterogeneous. Consequently, \emph{no} universal anti-collapse theorem can depend only on $\bar U^2$, $\sigma_U^2$, and $r$ without additionally controlling the overlap between the metric directions and $D$.

There is a second invariance, distinct from orthogonality. Even if $G_{ij} \neq 0$, a \emph{row-constant} metric correction has no effect: if $G_{ij} = c_i$ for all $j$, then $\softmax(S^{\mathrm{Euc}}_i - c_i/\tau) = \softmax(S^{\mathrm{Euc}}_i)$. The operational quantity is therefore not raw or projected metric energy, but \emph{contrast across competing keys}, e.g.\ $G_{ij} - G_{ik}$ or a row-centered version of $G$.

\paragraph{A sufficient condition: contrastive permutation margins.}
\label{sec:margin}
The preceding obstructions indicate what a correct positive statement must measure: how much the metric correction separates the intended key from its competitors. The following sufficient condition formalizes this.

\begin{proposition}[Permutation-Margin Spectral Preservation]
\label{prop:margin}
Let $A^{(\ell)} = \softmax(S^{(\ell)})$ be a row-stochastic attention matrix at layer $\ell$. Suppose that for each layer there is a permutation $\pi_\ell$ of the $L$ token positions such that
\begin{equation}
    S^{(\ell)}[i, \pi_\ell(i)] - S^{(\ell)}[i, j] \;\geq\; \gamma_\ell \qquad \text{for all } j \neq \pi_\ell(i),
\end{equation}
and define $\varepsilon_\ell = \sqrt{2L}(L-1) e^{-\gamma_\ell}$. If $\varepsilon_\ell < 1$ and $\sum_\ell \varepsilon_\ell < \infty$, then for a pure stack $H^{(n)} = A^{(n-1)} \cdots A^{(0)} H^{(0)}$,
\begin{equation}
    \inf_n \frac{\sigma_2(H^{(n)})}{\sigma_1(H^{(n)})} \;>\; 0,
\end{equation}
provided $\sigma_2(H^{(0)}) > 0$; the relative singular spectrum cannot collapse to rank one.
\end{proposition}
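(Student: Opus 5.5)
The plan is to compare each attention matrix $A^{(\ell)}$ with the permutation matrix $P_\ell$ defined by $P_\ell[i,\pi_\ell(i)] = 1$, and to show that the margin forces $\|A^{(\ell)} - P_\ell\|_{\mathrm{op}} \le \varepsilon_\ell$. Permutation matrices are orthogonal, so all singular values of $P_\ell$ equal $1$. Weyl's inequality (as used in Equation~\eqref{eq:weyl}) would then confine every singular value of $A^{(\ell)}$ to the interval $[1-\varepsilon_\ell,\, 1+\varepsilon_\ell]$. In other words, each layer is a near-isometry rather than a contraction toward rank one.

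\textbf{Step 1: entrywise softmax bounds.} Fix a row $i$ and write $p_j = A^{(\ell)}[i,j]$. Dividing numerator and denominator by $e^{S[i,\pi_\ell(i)]}$ and using the margin gives $p_j \le e^{-\gamma_\ell}$ for every $j \ne \pi_\ell(i)$. It also gives $1 - p_{\pi_\ell(i)} = \sum_{j \ne \pi_\ell(i)} p_j \le (L-1)e^{-\gamma_\ell}$.

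\textbf{Step 2: from rows to the operator norm.} The $i$-th row of $E_\ell \coloneqq A^{(\ell)} - P_\ell$ has one entry of size $1 - p_{\pi_\ell(i)}$ and off-target entries whose absolute values sum to the same quantity. Its squared $\ell_2$ norm is therefore at most $2\big((L-1)e^{-\gamma_\ell}\big)^2$. Summing over the $L$ rows gives $\|E_\ell\|_{\mathrm{op}} \le \|E_\ell\|_F \le \sqrt{2L}\,(L-1)e^{-\gamma_\ell} = \varepsilon_\ell$. This step explains the constant in the statement, and it is the only place where care is needed: the bound must be uniform in the row. Consequently $\sigma_{\min}(A^{(\ell)}) \ge 1 - \varepsilon_\ell > 0$ and $\sigma_1(A^{(\ell)}) \le 1 + \varepsilon_\ell$.

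\textbf{Step 3: propagate through depth.} I will use the standard product inequalities
\[
\sigma_k(AH) \ge \sigma_{\min}(A)\,\sigma_k(H), \qquad \sigma_1(AH) \le \sigma_1(A)\,\sigma_1(H),
\]
valid for square $A$ by the Courant--Fischer characterization. Applying them to $H^{(\ell+1)} = A^{(\ell)} H^{(\ell)}$ gives
\[
\frac{\sigma_2(H^{(n)})}{\sigma_1(H^{(n)})} \;\ge\; \frac{\sigma_2(H^{(0)})}{\sigma_1(H^{(0)})}\prod_{\ell<n}\frac{1-\varepsilon_\ell}{1+\varepsilon_\ell}.
\]

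\textbf{Step 4: the infinite product.} Since each $\varepsilon_\ell < 1$ and $\sum_\ell \varepsilon_\ell < \infty$, we have $\varepsilon_\ell \to 0$. Hence $\log\frac{1-\varepsilon_\ell}{1+\varepsilon_\ell} = -2\varepsilon_\ell + O(\varepsilon_\ell^2)$ eventually, so the logarithms are summable. Every factor is positive because $\varepsilon_\ell<1$, so the infinite product converges to a strictly positive limit. Together with $\sigma_2(H^{(0)}) > 0$, this gives a uniform positive lower bound on the ratio, which proves the claim.

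I expect no genuine obstacle here; the only delicate point is the bookkeeping in Step 2 that produces exactly the constant $\sqrt{2L}(L-1)$. It is also worth noting why the product in Step 4 must converge: non-summable leakage would let the product drift to zero.
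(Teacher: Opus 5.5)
Your proposal is correct and follows the paper's proof essentially step for step. Both arguments compare $A^{(\ell)}$ with the permutation matrix of $\pi_\ell$, use the same row-wise Frobenius bound giving $\varepsilon_\ell=\sqrt{2L}(L-1)e^{-\gamma_\ell}$, apply singular-value perturbation to confine the spectrum to $[1-\varepsilon_\ell,1+\varepsilon_\ell]$, and propagate this through depth via the strictly positive product $\prod_\ell (1-\varepsilon_\ell)/(1+\varepsilon_\ell)$; your Courant--Fischer justification of $\sigma_k(AH)\ge\sigma_{\min}(A)\,\sigma_k(H)$ and your logarithm argument for the product only fill in steps the paper leaves implicit.
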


\begin{proof}
Let $P^{(\ell)}$ be the permutation matrix of $\pi_\ell$. By softmax, $1 - A^{(\ell)}[i, \pi_\ell(i)] \le (L-1)e^{-\gamma_\ell}$ for each row $i$, so each row of $A^{(\ell)}$ is within Euclidean distance $\sqrt{2}(L-1)e^{-\gamma_\ell}$ of the corresponding row of $P^{(\ell)}$, giving $\|A^{(\ell)} - P^{(\ell)}\|_2 \le \|A^{(\ell)} - P^{(\ell)}\|_F \le \sqrt{2L}(L-1)e^{-\gamma_\ell} = \varepsilon_\ell$. All singular values of $P^{(\ell)}$ equal $1$, so singular-value perturbation yields $\sigma_{\min}(A^{(\ell)}) \ge 1 - \varepsilon_\ell$ and $\sigma_{\max}(A^{(\ell)}) \le 1 + \varepsilon_\ell$. For a product of such matrices, $\sigma_k(H^{(n)}) \ge \sigma_k(H^{(0)}) \prod_{\ell<n}(1-\varepsilon_\ell)$ and $\sigma_1(H^{(n)}) \le \sigma_1(H^{(0)}) \prod_{\ell<n}(1+\varepsilon_\ell)$. If $\varepsilon_\ell < 1$ and $\sum_\ell \varepsilon_\ell < \infty$, the product $\prod_\ell \frac{1-\varepsilon_\ell}{1+\varepsilon_\ell}$ is strictly positive, so $\inf_n \sigma_2(H^{(n)})/\sigma_1(H^{(n)}) > 0$ whenever $\sigma_2(H^{(0)}) > 0$.
\end{proof}

\paragraph{Riemannian specialization.} Since $S^{\mathrm{Riem}}_{ij} = -(E_{ij} + G_{ij})/\tau$, the margin condition follows if, for every competitor $j \neq \pi(i)$,
\begin{equation}
\label{eq:contrastivemargin}
    (E_{ij} - E_{i\pi(i)}) + (G_{ij} - G_{i\pi(i)}) \;\geq\; \tau\, \gamma_\ell.
\end{equation}
This identifies the missing geometric variable precisely: \emph{contrastive projected metric activity}, the amount by which the metric correction favors the designated key over its competitors, rather than raw metric magnitude. Metrics that are strong and diverse but do not create such contrast (e.g., by acting on $D^\perp$, or by adding a row-constant correction) cannot contribute to anti-collapse, exactly as the counterexample demonstrates.

\subsection{Expressivity: Parametric Capacity Analysis}

\begin{proposition}[Metric Parametric Capacity]
\label{prop:expressivity}
A token representation $h_t \in \mathbb{R}^d$ can modulate its pairwise distances to other tokens in two ways:
\begin{enumerate}[label=(\alph*)]
    \item \textbf{Vector movement}: Changing $h_t$ itself provides $d$ real-valued coordinates.
    \item \textbf{Metric modulation}: A rank-$r$ Riemannian metric $g_t = I + U_t U_t^\top$ with $U_t = \mathrm{MetricNet}(h_t) \in \mathbb{R}^{d \times r}$ is parameterized by a $d \times r$ matrix. The metric generator (MetricNet) has $O(d^2 \cdot r)$ trainable parameters (a two-layer MLP), but the per-token metric factor $U_t$ is a deterministic function of the $d$-dimensional input $h_t$, so the information bottleneck remains $d$-dimensional. The additional \emph{parametric} capacity (the MLP weights) allows the model to learn a mapping from token representations to metric geometries, and the $d \cdot r$ output dimensions of $U_t$ provide a richer \emph{representation} of the token's geometric environment than the $d$-dimensional hidden state alone.
\end{enumerate}
The metric generator provides $O(d^2 r)$ learnable parameters beyond the standard Transformer, but the information-theoretic degrees of freedom available to a single token are still bounded by the input dimension $d$. The benefit is architectural: MetricNet learns to allocate geometric capacity (stretching specific directions for specific token types) in a way that a flat Euclidean space cannot express, even though each $U_t$ is a deterministic function of $h_t$.
\end{proposition}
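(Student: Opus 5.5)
This proposition is essentially a counting statement, so the proof amounts to making the bookkeeping explicit. I will fix a concrete MetricNet architecture and count its parameters, then argue the information-bottleneck claim by factoring through $h_t$. I will take MetricNet to be a two-layer MLP $h_t \mapsto W_2\,\sigma(W_1 h_t + b_1) + b_2$. The hidden width is $m = \Theta(d)$, with $W_1 \in \mathbb{R}^{m \times d}$ and $W_2 \in \mathbb{R}^{dr \times m}$. The $dr$-dimensional output is reshaped into $U_t \in \mathbb{R}^{d \times r}$.

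\textbf{Step 1: part (a).} Vector movement is immediate: $h_t \in \mathbb{R}^d$ has $d$ real coordinates, and every pairwise distance involving token $t$ depends on $t$ only through $h_t$, via $\mathbf{q}_t = W_Q h_t$ and $\mathbf{k}_t = W_K h_t$.

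\textbf{Step 2: the parameter count in part (b).} The total count is $md + m + dr\cdot m + dr$. With $m = \Theta(d)$ this is $\Theta(d^2) + \Theta(d^2 r) = O(d^2 r)$, and the $W_2$ block dominates. I will also note that $U_t$ has exactly $dr$ output entries. By Equation~\eqref{eq:lowrankmetric}, these determine $g_t = I + U_t U_t^\top$, which satisfies $g_t \succeq I$ automatically.

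\textbf{Step 3: the bottleneck claim.} The per-token degrees of freedom are bounded by $d$ because $U_t = \mathrm{MetricNet}(h_t)$ is a fixed map once the weights are fixed. The set of reachable metric factors is therefore the image $\mathrm{MetricNet}(\mathbb{R}^d) \subset \mathbb{R}^{d\times r}$. For a smooth activation (or piecewise-linear, such as ReLU), this image is a finite union of sets of dimension at most $d$. For smooth activations this follows from Sard's theorem, or simply from the rank of the Jacobian being at most $d$; for ReLU it holds piecewise. Hence the map $h_t \mapsto (\mathbf{q}_t, \mathbf{k}_t, U_t)$ has an image of dimension at most $d$ inside a $(2d + dr)$-dimensional ambient space.

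The remaining statements are interpretive. The extra capacity lives in the shared weights, not in the individual token. I will support the claim that a flat Euclidean space ``cannot express'' this allocation by pointing back to Theorem~\ref{thm:nongram}. That theorem shows that the induced score function, viewed as a function of $(h_i, h_j)$, is not $O(d)$-Gram whenever the MetricNet image spans a subspace of $\mathrm{Sym}(d)$ of dimension at least $2$. So a Euclidean bilinear score of the same width cannot reproduce it, even though the per-token information is $d$-dimensional.

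\textbf{Main obstacle.} The main difficulty is making ``information-theoretic degrees of freedom'' precise. I will define it as the dimension of the image manifold, i.e.\ the generic rank of the Jacobian of $h_t \mapsto U_t$, which is at most $d$. A fully rigorous version would have to exclude pathological continuous maps such as space-filling curves. I will handle this by explicitly assuming MetricNet is locally Lipschitz and piecewise smooth, which covers standard MLPs. The parameter count itself presents no difficulty.
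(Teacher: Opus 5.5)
\textbf{Verdict: the counting parts are correct; the appeal to Theorem~\ref{thm:nongram} for the ``cannot express'' clause does not work as stated.}

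\textbf{What is correct, and how it differs from the paper.} Part (a), the weight count $md + m + drm + dr = \Theta(d^2 r)$ for $m = \Theta(d)$, and the $d$-dimensional bottleneck are all fine. Your route differs from the paper's proof in two ways.
\begin{itemize}
\item The paper does not count MLP weights. It computes the intrinsic dimension of the reachable metrics instead. Since $UU^\top$ is invariant under $U \mapsto UO$ with $O \in \mathrm{O}(r)$, the image of $U \mapsto I + UU^\top$ is the rank-$r$ PSD manifold of dimension $dr - r(r-1)/2$. So the $dr$ entries of $U_t$ slightly overcount the metric's degrees of freedom, a point you skip.
\item The paper only asserts the bottleneck. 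Your Lipschitz/piecewise-affine image argument actually proves it. Two small fixes are needed. For SiLU the image is a \emph{countable} (not finite) union of pieces. The relevant fact is that locally Lipschitz maps do not raise Hausdorff dimension; Sard only gives that the image is Lebesgue-null in $\mathbb{R}^{dr}$.
\end{itemize}

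\textbf{The gap.} You support the ``cannot express'' clause via Theorem~\ref{thm:nongram} under the hypothesis that $\{\mathrm{MetricNet}(h)\mathrm{MetricNet}(h)^\top\}$ spans a subspace of $\mathrm{Sym}(d)$ of dimension at least $2$. Two things go wrong.

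First, that hypothesis does not exclude an $O(d)$ factorization. Suppose every $U_tU_t^\top = \alpha_t A_1 + \beta_t A_2$; for example $r=2$ and $U(h) = [f_1(h)u,\ f_2(h)v]$, which a MetricNet can realize. Then, up to signs, $-\tau s_{ij}$ splits into:
\begin{itemize}
\item an $i$-only and a $j$-only term, costing $2$ coordinates;
\item the cross terms $(2\mathbf{q}_i + U_iU_i^\top\mathbf{q}_i)^\top\mathbf{k}_j$ and $\mathbf{q}_i^\top(U_jU_j^\top\mathbf{k}_j)$, costing $2d$ coordinates;
\item $\tfrac12\alpha_i\,\mathbf{k}_j^\top A_1\mathbf{k}_j + \tfrac12\beta_i\,\mathbf{k}_j^\top A_2\mathbf{k}_j$ plus its mirror term in $\mathbf{q}_i$, costing $4$ coordinates.
\end{itemize}
Hence $d' \le 2d+6$. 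The quantitative bound~\eqref{eq:nongrambound} only gives $d' \ge \dim\spn\{U_tU_t^\top\}$. What you actually need is that this span has dimension $\omega(d)$, e.g.\ all of $\mathrm{Sym}(d)$.

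Second, the theorem's proof lets $U_i$ range over all of $\mathbb{R}^{d\times r}$ independently of $\mathbf{k}_j$. Once you view the score as a function of $(h_i,h_j)$, $U_i$ is confined to the MetricNet image. The lower bound therefore has to be rederived with $\spn\{\mathrm{MetricNet}(h)\mathrm{MetricNet}(h)^\top : h\in\mathbb{R}^d\}$ in place of $\mathrm{Sym}(d)$. You can use that $\{\mathbf{k}\mathbf{k}^\top : \mathbf{k} = W_K h\}$ spans $\mathrm{Sym}(d)$ when $W_K$ is invertible. You must also check that the remaining terms, in particular the mirror term $\mathbf{q}_i^\top U_jU_j^\top\mathbf{q}_i$, cannot cancel it.

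Done this way, the argument also reconciles the two halves of the proposition. A nonlinear MetricNet has an image of dimension at most $d$, yet that image can linearly span a $\Theta(d^2)$-dimensional subspace of $\mathrm{Sym}(d)$. The paper's own proof does not attempt any of this; it simply asserts the clause.
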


\begin{proof}
The metric $g_t = I + U_t U_t^\top$ is a $d \times d$ symmetric matrix. The space of $d \times d$ symmetric matrices has dimension $\frac{d(d+1)}{2}$, and the positive-definite cone has the same dimension. The map $U \mapsto I + UU^\top$ from $\mathbb{R}^{d \times r}$ to $\text{SPD}(d)$ has image equal to the manifold of rank-$\leq r$ positive-semidefinite perturbations of $I_d$, which has dimension $dr - r(r-1)/2$ (the dimension of the rank-$r$ PSD manifold); this is $\Theta(dr)$ for $r \leq d$. So the low-rank parameterization can explore a $dr$-dimensional submanifold of $\mathrm{SPD}(d)$. However, since $U_t = \mathrm{MetricNet}(h_t)$, the actual degrees of freedom per token are constrained by the $d$-dimensional input. The architectural value lies in MetricNet's ability to map different regions of the $d$-dimensional hidden space to different metric geometries, a nonlinear mapping that a purely vector-based architecture cannot replicate.
\end{proof}

\begin{remark}
The degrees-of-freedom advantage is not merely quantitative; it is \emph{qualitative}. Vector movement changes \emph{all} pairwise distances involving $h_t$ simultaneously (because moving $h_t$ changes $\|h_t - h_s\|$ for every $s$). Metric modulation can change the distance to token $s$ \emph{independently} of the distance to token $s'$, by stretching more along the direction of $h_t - h_s$ while stretching less along $h_t - h_s'$. (Note: with $g_t = I + U_t U_t^\top \succeq I_d$, the metric can only \emph{stretch} directions; all eigenvalues are $\geq 1$, so it never compresses them below the Euclidean baseline; see the limitation discussed in Section~\ref{sec:naturalffn}. The independent modulation comes from stretching \emph{different} directions by \emph{different} amounts, not from compression.) This independent modulation is impossible in Euclidean geometry, where all distances are coupled through the single vector $h_t$.
\end{remark}

\subsection{The Nash Embedding Objection}

A natural objection: the Nash Embedding Theorem \cite{nash1956imbedding} proves that any Riemannian manifold can be isometrically embedded in a sufficiently high-dimensional Euclidean space. Could a wide enough Euclidean hidden space subsume any non-Euclidean geometry, rendering explicit metrics unnecessary?

He et al.\ \cite{he2025position} provide a systematic refutation relevant to learned representations:

\begin{enumerate}[label=(\arabic*)]
    \item \textbf{Dimensional blowup.} The smooth Nash embedding guarantees an isometric embedding into $\mathbb{R}^n$ with $n \leq \frac{m(3m+11)}{2}$ for an $m$-dimensional manifold. For $m \sim 10^2$ (typical per-head dimensions), this already requires $n \sim 10^4$, exceeding practical Transformer dimensions and defeating the purpose.
    \item \textbf{Insufficient differentiability.} The $C^1$ Nash-Kuiper embedding achieves lower target dimensions but is only once differentiable ($C^1$, not $C^2$), so the curvature tensor is ill-defined and the embedding is too irregular for gradient-based learning that relies on second-order geometric structure.
    \item \textbf{Non-constructivity.} The theorem is existential: there is no known training algorithm that causes gradient descent to discover a Nash-style isometric embedding. The Euclidean geometry of Transformer hidden spaces is a genuine structural constraint, not one surmountable by adding dimensions.
\end{enumerate}

The conclusion is that explicit Riemannian metrics provide a form of geometric expressivity that cannot be replicated by simply widening a Euclidean hidden space, provided the metric actually interacts with the directions in which queries and keys differ. Section~\ref{sec:counterexample} shows that metrics acting only on the orthogonal complement of the active query--key difference subspace $D$ have no effect on attention at all; the expressivity argument is therefore conditional on the metric directions overlapping $D$.

\subsection{Heterogeneous vs.\ Constant-Curvature Geometry}

Prior work on non-Euclidean representations has focused on \emph{constant-curvature} spaces, predominantly hyperbolic spaces with global negative curvature \cite{nickel2017poincare,he2025helm}. However, natural language semantics are not uniformly curved. Function words (``the,'' ``is,'' ``of'') inhabit nearly flat regions; content words span richer geometries; specialized terminology activates highly curved, context-specific structures.

The semantic manifold is \emph{heterogeneous}: curvature varies continuously by token and context. This motivates our use of \emph{learned, per-token} Riemannian metrics rather than a single global curvature parameter. The low-rank representation $g_t = I + U_t U_t^\top$ allows the curvature at token $t$, measured by $\kappa_t = \|U_t\|_F^2$, to vary freely across positions, from near-zero (flat, Euclidean-like) to large values (strongly curved). The key distinction from constant-curvature approaches \cite{gulcehre2019hyperbolic,yang2024hypformer}, which apply a single global curvature to all tokens, is this per-position variability; we note that learned per-token geometry has also been explored in recent geometric attention frameworks \cite{ji2025riemannformer,lin2025cat,li2026mahalanobis}, and the present analysis additionally characterizes when such per-token geometry can and cannot affect attention (Section~\ref{sec:counterexample}).

\section{Efficient Computation via Low-Rank Factorization}
\label{sec:efficient}

A full per-token metric $g_t \in \mathbb{R}^{d \times d}$ would require $O(d^2)$ storage and $O(d^3)$ operations per token, which is infeasible at scale. The low-rank representation $g_t = I + U_t U_t^\top$ renders the metric operations tractable; the dominant residual cost is metric \emph{generation} itself, since MetricNet (Section~\ref{sec:metricnet}) has $O(d^2 r)$ parameters and costs $O(BL d^2 r)$ per layer (Proposition~\ref{prop:complexity}).

\subsection{Metric Inversion via the Woodbury Identity}

The metric-preconditioned update in the feed-forward sublayer (Section~\ref{sec:naturalffn}) requires $g_t^{-1}$. The Woodbury identity yields a closed form:

\begin{proposition}[Efficient Metric Inversion]
\label{prop:woodbury}
For $g_t = I_d + U_t U_t^\top$ with $U_t \in \mathbb{R}^{d \times r}$:
\begin{equation}
\label{eq:woodbury}
    g_t^{-1} = I_d - U_t (I_r + U_t^\top U_t)^{-1} U_t^\top
\end{equation}
The inversion reduces from $O(d^3)$ (for a general $d \times d$ matrix) to $O(d \cdot r^2 + r^3)$, which is $O(d \cdot r^2)$ when $r \ll d$. For $d = 4096, r = 8$: $\sim 2.6 \times 10^5$ vs.\ $\sim 6.9 \times 10^{10}$, a $\sim 10^5\times$ speedup.
\end{proposition}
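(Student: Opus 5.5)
The plan has three parts: verify the Woodbury formula directly, confirm that the matrix being inverted is well defined, and then count operations for the closed form.

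\emph{Correctness.} Write $U = U_t$ and $C = I_r + U^\top U$. First, $C$ is invertible: for every nonzero $x \in \mathbb{R}^r$ we have $x^\top C x = \|x\|^2 + \|Ux\|^2 > 0$, so $C$ is symmetric positive definite. Rather than quoting the general Woodbury identity, I will check the claimed inverse by multiplication. Expanding $(I_d + UU^\top)\bigl(I_d - U C^{-1} U^\top\bigr)$ gives
\[
I_d + UU^\top - U C^{-1} U^\top - U (U^\top U) C^{-1} U^\top .
\]
The last two terms combine to $-U (I_r + U^\top U) C^{-1} U^\top = -UU^\top$, which cancels the $UU^\top$ term and leaves $I_d$. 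Since $g_t$ is square and the product is $I_d$, this one-sided check already makes the right-hand side of Equation~\eqref{eq:woodbury} the two-sided inverse.

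\emph{Cost.} The closed form is evaluated in three steps:
\begin{itemize}
\item forming $U^\top U$ costs $O(dr^2)$;
\item inverting or Cholesky-factoring the $r\times r$ matrix $C$ costs $O(r^3)$;
\item forming $C^{-1}U^\top$ costs $O(dr^2)$.
\end{itemize}
The total is $O(dr^2 + r^3)$, which reduces to $O(dr^2)$ when $r \le d$.

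\emph{Caveat.} Materializing the dense $d\times d$ matrix $g_t^{-1}$ would itself cost $\Theta(d^2 r)$ operations. So ``inversion'' here must mean computing the factored representation $(U, C^{-1})$. With that representation, applying $g_t^{-1}$ to a vector costs $O(dr)$, which is all the metric-preconditioned FFN needs. The $O(d^3)$ baseline is the standard cost of Gaussian elimination on a general dense matrix. Plugging in $d=4096$ and $r=8$ gives $dr^2 \approx 2.6\times10^5$ against $d^3 \approx 6.9\times10^{10}$, a ratio of about $2.6\times10^5$, which matches the stated order of $10^5$.

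The only point needing care is the factored-form caveat; the rest is routine.
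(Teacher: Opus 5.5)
Your proposal is correct and follows the paper's route. The paper cites the general Woodbury identity with $A = I_d$, $B = I_r$, $C = U_t$ and counts $O(r^3)$ for the inner inverse plus $O(dr^2)$ for the products; you instead verify the specialized formula by direct multiplication and check that $I_r + U_t^\top U_t$ is positive definite, which makes the argument self-contained.

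Your caveat is a fair sharpening of the paper's cost claim, which the paper leaves implicit. The $O(dr^2 + r^3)$ bound holds only for the factored representation, since forming the dense product $U_t (I_r + U_t^\top U_t)^{-1} U_t^\top$ costs $\Theta(d^2 r)$. The factored reading is consistent with the paper's remark that $g_t$ is never materialized, and with Phase E needing only matrix--vector products.
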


\begin{proof}
The Woodbury identity states $(A + CBC^\top)^{-1} = A^{-1} - A^{-1}C(B^{-1} + C^\top A^{-1}C)^{-1}C^\top A^{-1}$. Setting $A = I_d$, $B = I_r$, $C = U_t$ gives the result. The inner matrix $I_r + U_t^\top U_t \in \mathbb{R}^{r \times r}$ is inverted in $O(r^3)$; the matrix products are $O(d \cdot r^2)$.
\end{proof}

\subsection{Geodesic Distance Decomposition}

The geodesic distance under the pairwise metric $g_{ij}$ decomposes into per-token quantities:

\begin{proposition}[Geodesic Distance Decomposition]
\label{prop:geodesic}
The squared geodesic distance $d_g^2(\mathbf{q}_i, \mathbf{k}_j)$ under $g_{ij} = I + \frac{U_i U_i^\top + U_j U_j^\top}{2}$ decomposes as:
\begin{equation}
\label{eq:geodecomp}
    d_g^2 = \underbrace{\|\mathbf{q}_i - \mathbf{k}_j\|^2}_{\text{Euclidean}} + \underbrace{\frac{1}{2}\|U_i^\top(\mathbf{q}_i - \mathbf{k}_j)\|^2}_{\text{metric } i} + \underbrace{\frac{1}{2}\|U_j^\top(\mathbf{q}_i - \mathbf{k}_j)\|^2}_{\text{metric } j}
\end{equation}
Each metric term further decomposes:
\begin{equation}
\label{eq:metricdecomp}
    \|U_i^\top(\mathbf{q}_i - \mathbf{k}_j)\|^2 = \|U_i^\top \mathbf{q}_i\|^2 + \|U_i^\top \mathbf{k}_j\|^2 - 2\,(U_i^\top \mathbf{q}_i)^\top (U_i^\top \mathbf{k}_j)
\end{equation}
The per-token quantities $\|U_i^\top \mathbf{q}_i\|^2$ and $U_i^\top \mathbf{q}_i$ are precomputed once per token in $O(d \cdot r)$. The pairwise term $(U_i^\top \mathbf{q}_i)^\top (U_i^\top \mathbf{k}_j)$ requires $O(r)$ per pair.
\end{proposition}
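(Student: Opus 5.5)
The plan is to expand the quadratic form directly and then count operations. Write $\mathbf{v}_{ij} = \mathbf{q}_i - \mathbf{k}_j$. Because $g_{ij}$ is linear in $U_iU_i^\top$ and $U_jU_j^\top$, we get
\[
\mathbf{v}_{ij}^\top g_{ij}\mathbf{v}_{ij} = \|\mathbf{v}_{ij}\|^2 + \tfrac12\,\mathbf{v}_{ij}^\top U_iU_i^\top\mathbf{v}_{ij} + \tfrac12\,\mathbf{v}_{ij}^\top U_jU_j^\top \mathbf{v}_{ij}.
\]
Each of the last two quadratic forms equals $\|U_t^\top \mathbf{v}_{ij}\|^2$, which gives Equation~\eqref{eq:geodecomp}. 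This is the same computation as Equation~\eqref{eq:expanded} in the proof of Theorem~\ref{thm:nongram}, without the factor $-1/\tau$, so I will simply cite it.

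Next I expand each metric term by bilinearity of the Euclidean inner product in $\mathbb{R}^r$: $U_i^\top(\mathbf{q}_i - \mathbf{k}_j) = U_i^\top\mathbf{q}_i - U_i^\top\mathbf{k}_j$, and $\|a-b\|^2 = \|a\|^2 + \|b\|^2 - 2a^\top b$. This gives Equation~\eqref{eq:metricdecomp}. The $j$-metric term is handled the same way, as $\|U_j^\top\mathbf{q}_i\|^2 + \|U_j^\top\mathbf{k}_j\|^2 - 2(U_j^\top\mathbf{q}_i)^\top(U_j^\top\mathbf{k}_j)$.

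For the complexity claim, computing $U_i^\top\mathbf{q}_i \in \mathbb{R}^r$ is a $d\times r$ matrix--vector product costing $O(dr)$, and its squared norm costs a further $O(r)$. These are done once per token. Given the stored $r$-vectors, each pairwise inner product costs $O(r)$.

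The main obstacle is making the cost accounting honest. Terms such as $\|U_i^\top\mathbf{k}_j\|^2$ and $U_j^\top\mathbf{q}_i$ mix token $i$'s metric with token $j$'s vector, so they cannot be computed once per token. To keep the stated bound, I will treat the projection $U_i^\top\mathbf{k}_j$ as part of the pairwise stage. Its cost is $O(dr)$ per pair if done naively, but in batched form it is computed as a single $L\times L\times r$ contraction with $O(L^2dr)$ total work, i.e.\ $O(r/d)$ relative overhead on top of the $O(L^2d)$ Euclidean scores.

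I will state explicitly what the $O(r)$-per-pair claim covers. It refers to the inner products among already-projected $r$-vectors, namely $(U_i^\top\mathbf{q}_i)^\top(U_i^\top\mathbf{k}_j)$ and its symmetric counterpart. The cross-projections themselves are charged at the $O(dr)$ batched level per pair, which is consistent with the $O(r/d)$ score-level overhead cited in Proposition~\ref{prop:complexity}. Everything else in the argument is routine algebra.
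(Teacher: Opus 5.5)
Your derivation of \eqref{eq:geodecomp} and \eqref{eq:metricdecomp} is correct and follows the paper's own route. It is the expansion \eqref{eq:expanded}--\eqref{eq:metrici} from the proof of Theorem~\ref{thm:nongram} with the factor $-1/\tau$ dropped, and the $O(dr)$ per-token cost of $U_i^\top\mathbf{q}_i$ and its norm is fine. You are also right that $U_i^\top\mathbf{k}_j$ and $U_j^\top\mathbf{q}_i$ are indexed by the pair and cannot be precomputed per token. The paper's proof of Proposition~\ref{prop:complexity} slips exactly here: it lists $U_i^\top\mathbf{k}_j$ among the $O(BLdr)$ per-token precomputations.

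The genuine error is in how you close the accounting. A batched contraction with $O(L^2dr)$ total work, measured against the $O(L^2d)$ Euclidean scores, gives an overhead ratio of $O(L^2dr)/O(L^2d)=O(r)$, not $O(r/d)$. Your charging scheme therefore contradicts the $O(r/d)$ score-level overhead of Proposition~\ref{prop:complexity} rather than agreeing with it. Once the cross-projections are charged at $O(dr)$ per pair, the ``$O(r)$ per pair'' claim survives only in the trivial sense that an inner product of two given $r$-vectors costs $O(r)$. The actual per-pair cost of the metric correction is $O(dr)$.

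Better batching does not rescue the stronger reading either. The cross terms are harmless: $(U_i^\top\mathbf{q}_i)^\top(U_i^\top\mathbf{k}_j)=(U_iU_i^\top\mathbf{q}_i)^\top\mathbf{k}_j$. After precomputing the $d$-vector $U_iU_i^\top\mathbf{q}_i$ in $O(dr)$ per token, this term (and symmetrically $\mathbf{q}_i^\top U_jU_j^\top\mathbf{k}_j$) can be folded into the Euclidean dot product without changing its $O(d)$ per-pair order.

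The real obstruction is the squared sub-terms $\|U_i^\top\mathbf{k}_j\|^2=\mathbf{k}_j^\top U_iU_i^\top\mathbf{k}_j$ and $\|U_j^\top\mathbf{q}_i\|^2$. These are precisely the quadratic forms that Theorem~\ref{thm:nongram} shows cannot be written as inner products of fixed per-token feature maps of dimension $O(d)$. So no per-token precomputation followed by an $O(r)$-dimensional (or even $O(d)$-dimensional) pairwise inner product can produce them. Direct computation costs $O(dr)$ per pair, via an $(Lr)\times d$ by $d\times L$ matrix product, and vectorization costs $O(d^2)$ per pair.

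What your argument actually supports is this: the two identities hold, the per-token quantities cost $O(dr)$, and the metric correction costs $O(dr)$ per pair, an $O(r)$-fold overhead over the Euclidean scores. You should state that the proposition's cost claim holds only in this weakened form, rather than assert consistency with Proposition~\ref{prop:complexity}.
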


\subsection{Complexity Analysis}

\begin{proposition}[Computational Complexity of Riemannian Attention]
\label{prop:complexity}
For a sequence of length $L$, hidden dimension $d$, batch size $B$, and metric rank $r$:
\begin{center}
\begin{tabular}{lcc}
\toprule
\textbf{Operation} & \textbf{Standard Attention} & \textbf{Riemannian Attention} \\
\midrule
Score computation & $O(BL^2 d)$ & $O(BL^2 d) + O(BL^2 r)$ \\
Metric generation & --- & $O(BL d^2 r)$ (MLP, $m=1$) \\
Metric inversion (per FFN) & --- & $O(BLdr^2)$ \\
\bottomrule
\end{tabular}
\end{center}
The score-level geometric overhead is $O(r/d)$: the metric corrections add $O(r)$ per pair to the $O(d)$ Euclidean score. MetricNet (Section~\ref{sec:metricnet}) is a two-layer MLP with $O(d^2 r)$ parameters (Proposition~\ref{prop:expressivity}) and per-layer cost $O(BL d^2 r)$, i.e.\ an $O(dr/L)$ overhead relative to the $O(BL^2 d)$ attention cost; the total overhead ratio is therefore $O(r/d + dr/L)$, which is negligible when $r \ll d$ \emph{and} $L \gg dr$. A bottlenecked MetricNet variant (hidden width $O(r)$ instead of $dm$) reduces the generation cost to $O(BL d r^2)$, an $O(r^2/L)$ overhead. The full $d \times d$ metric matrix $g_t$ is \emph{never materialized} in any operation.
\end{proposition}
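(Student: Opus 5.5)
The plan is to account for each row of the table by operation counting, reusing the decompositions already established. For \emph{score computation}, I would invoke Proposition~\ref{prop:geodesic}. The Euclidean part is $\|\mathbf{q}_i-\mathbf{k}_j\|^2 = \|\mathbf{q}_i\|^2+\|\mathbf{k}_j\|^2-2\mathbf{q}_i^\top\mathbf{k}_j$. The norms cost $O(Ld)$ per sequence and the inner products $O(L^2 d)$, which gives $O(BL^2 d)$ overall and matches standard attention.

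For the metric part, I would precompute per token the $r$-vectors $U_i^\top\mathbf{q}_i$ and $U_i^\top\mathbf{k}_i$, the scalars $\|U_i^\top\mathbf{q}_i\|^2$ and $\|U_i^\top\mathbf{k}_i\|^2$, and the matrix $U_i^\top U_j$ applied on the fly. A pair cross term such as $\|U_i^\top\mathbf{k}_j\|^2$ is not purely per-token, because it needs $U_i^\top\mathbf{k}_j$. The same issue arises for $\mathbf{q}_i$ under $U_j$. I would therefore compute $U_i^\top\mathbf{k}_j$ as a batched product $U_i^\top K^\top$ for all $i,j$. This costs $O(L^2 dr)$ rather than $O(L^2 r)$.

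This mismatch is where I expect the main obstacle. The stated $O(BL^2 r)$ score overhead holds literally only for the terms covered by the precomputation in Proposition~\ref{prop:geodesic}, namely the self terms and the $O(r)$ inner product $(U_i^\top\mathbf{q}_i)^\top(U_i^\top\mathbf{k}_j)$. I would handle this by first computing the mixed projections $\mathbf{k}_j^\top U_i$ and $\mathbf{q}_i^\top U_j$ for all pairs. Then I would either flag the cost honestly as $O(BL^2 dr)$ in the worst case, or assume a shared-basis or low-rank structure that reduces it. The ratio claim $O(r/d)$ would then be stated as counting the per-pair work after these projections are available, that is, the $O(r)$ inner products per pair.

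For \emph{metric generation}, I would use the two-layer MLP of Proposition~\ref{prop:expressivity}. It has input width $d$, hidden width $dm$ with $m=1$, and output width $dr$. Its weight matrices are $d\times d$ and $d\times dr$, so it has $O(d^2 r)$ parameters. A forward pass costs $O(d^2 r)$ per token, hence $O(BLd^2 r)$ per layer. For the bottlenecked variant with hidden width $O(r)$, the layers are $d\times r$ and $r\times dr$, which gives $O(dr^2)$ per token.

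For \emph{metric inversion}, I would cite Proposition~\ref{prop:woodbury} for a cost of $O(dr^2)$ per token. I would also note that $g_t^{-1}v$ is applied without materializing $g_t$: compute $U_t^\top v$, then an $r\times r$ solve, then multiply by $U_t$. This gives $O(BLdr^2)$ per FFN.

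Finally, I would divide each added cost by $BL^2 d$. The score overhead gives $r/d$, generation gives $dr/L$, the bottleneck variant gives $r^2/L$, and inversion gives $r^2/L$, which is dominated. These ratios are negligible when $r\ll d$ and $L\gg dr$. The non-materialization claim follows because every step above touches only $U_t$, $r$-vectors, and $r\times r$ matrices.
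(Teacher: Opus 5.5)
Your route is the same as the paper's proof. You count each row of the table separately, use the MetricNet shapes $d\to dm\to dr$ for generation and $d\to O(r)\to dr$ for the bottleneck, use Proposition~\ref{prop:woodbury} for inversion, and divide every added cost by $BL^2d$. On generation, the bottleneck variant, inversion, and the ratios $dr/L$ and $r^2/L$, you match the paper's proof exactly. Your remark that $g_t^{-1}v$ is evaluated as $v-U_t(I_r+U_t^\top U_t)^{-1}U_t^\top v$ also justifies the non-materialization claim, which the paper only asserts.

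On the score row your objection is correct. It exposes an error in the paper's proof, not in yours. The paper lists $U_i^\top\mathbf{k}_j$ among the ``per-token precomputations'' costing $O(BLdr)$ in total. But this quantity is indexed by pairs: there are $L^2$ such $r$-vectors per sequence, each costing $O(dr)$, and the same holds for $U_j^\top\mathbf{q}_i$.

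No cleverer precomputation removes the terms $\|U_i^\top\mathbf{k}_j\|^2$ and $\|U_j^\top\mathbf{q}_i\|^2$. They are exactly the non-separable quadratic terms used in Theorem~\ref{thm:nongram}. If a per-token feature $\psi(\mathbf{k}_j)$ lets you read off $\mathbf{k}_j^\top U_iU_i^\top\mathbf{k}_j$ for arbitrary $U_i$, then $\psi(\mathbf{k})=\psi(\mathbf{k}')$ forces $\mathbf{k}\mathbf{k}^\top=\mathbf{k}'\mathbf{k}'^\top$. A continuous such feature therefore needs at least $d$ coordinates, not $O(r)$.

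What the counting actually supports is:
\begin{itemize}
\item a score cost of $O(BL^2dr)$;
\item a score-level overhead ratio of $O(r)$ rather than $O(r/d)$;
\item a total overhead of $O(r+dr/L)$.
\end{itemize}
This is an $r$-fold increase in score cost, which is not negligible for $r\in\{4,8\}$. It is still $O(BL^2d)$ only if $r$ is treated as a constant, and the proposition tracks $r$ explicitly.

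Your fallback of restating the $O(r/d)$ ratio as ``per-pair work after the projections are available'' does not prove the stated entry; it simply omits the dominant term. A shared column basis for all $U_t$ would be an extra architectural assumption that MetricNet does not impose. So neither your proposal nor the paper's proof establishes the $O(BL^2r)$ entry or the ``negligible'' conclusion as written. You should present the corrected bound rather than reinterpret the ratio.

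One small inaccuracy on your side: the cross term $(U_i^\top\mathbf{q}_i)^\top(U_i^\top\mathbf{k}_j)$ is not covered by per-token precomputation either. It is separable with $d$-dimensional factors as $(U_iU_i^\top\mathbf{q}_i)^\top\mathbf{k}_j$, so it costs $O(d)$ per pair. That is absorbed into the Euclidean $O(BL^2d)$ term, not into an $O(BL^2r)$ one.
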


\begin{proof}
Standard attention: $QK^\top$ costs $O(BL^2 d)$ (per head, summed over heads gives the same). Riemannian attention: the Euclidean part $\|\mathbf{q}_i - \mathbf{k}_j\|^2$ costs $O(BL^2 d)$; the metric corrections require $O(BLdr)$ for per-token precomputation ($U_i^\top \mathbf{q}_i$, $U_i^\top \mathbf{k}_j$, etc.) and $O(BL^2 r)$ for the pairwise interaction, i.e.\ a score-level overhead of $O(r/d)$ per pair. Metric generation: MetricNet is the two-layer MLP of Section~\ref{sec:metricnet} with weights $W_1 \in \mathbb{R}^{d \times dm}$ and $W_2 \in \mathbb{R}^{dm \times dr}$, so the forward pass costs $O(d \cdot dm + dm \cdot dr) = O(d^2 m r)$ per token, i.e.\ $O(BL d^2 r)$ per layer for the default $m = 1$, an $O(dr/L)$ overhead relative to attention; a bottlenecked variant with hidden width $O(r)$ costs $O(d r^2)$ per token instead. Metric inversion: by Proposition~\ref{prop:woodbury}, $O(d r^2)$ per token, so $O(BLdr^2)$ total. The total additional cost over standard attention is $O(BL^2 r + BL d^2 r + BL d r^2)$, which is small relative to $O(BL^2 d)$ when $r \ll d$ and $L \gg dr$.
\end{proof}

\begin{remark}[Connection to FlashAttention]
The decomposition in Proposition~\ref{prop:geodesic} is compatible in principle with fused attention kernels (e.g., FlashAttention \cite{dao2022flashattention}): the metric correction can be computed as an additive bias to the attention scores, in the same spirit as ALiBi \cite{press2022alibi} or relative position embeddings, and fused into the softmax without materializing the full $L \times L$ score matrix in HBM. This indicates that Riemannian attention can be integrated within existing efficient attention kernels; an implementation-level demonstration, with measured throughput, is future work.
\end{remark}

\section{Architecture Design: The Fiber Bundle Transformer}
\label{sec:architecture}

We now present the \emph{Fiber Bundle Transformer}, a complete architecture specification grounded in the theory of Sections~\ref{sec:theory}--\ref{sec:efficient}. The geometric framing, attention as metric-dependent similarity in a curved space with the metric induced by the token representations and the connection implementing parallel transport, has been explored in prior work \cite{ji2025riemannformer,disipio2025curved}; our contribution is not the framing itself but the specific low-rank parameterization $g_t = I + U_t U_t^\top$, the theoretical analysis of Sections~\ref{sec:theory}--\ref{sec:efficient} (including the necessary and sufficient conditions of Section~\ref{sec:counterexample}), and the identification of the failure modes in Section~\ref{sec:predictions}. The design is presented as a \emph{theoretical proposal} with mathematical justification; we do not claim empirical validation, and we do not claim that the geometric framing is novel.

\subsection{Fiber Bundle Formulation}
\label{sec:fiberbundle}

\begin{definition}[Fiber bundle attention model]
Consider the sequence of token positions as a discrete base space $\mathcal{B} = \{1, 2, \ldots, L\}$. At each position $t$, attach a \emph{fiber} $\mathcal{F}_t$, which is a copy of the hidden space $\mathbb{R}^d$ equipped with its own Riemannian metric $g_t = I + U_t U_t^\top$.

The hidden state $h_t$ is a \emph{section} of this fiber bundle: a choice of one vector from each fiber. In the standard Transformer, all fibers share the Euclidean metric, so a vector $v \in \mathbb{R}^d$ has the same meaning in every fiber. In the Fiber Bundle Transformer, the same vector $v$ has \emph{different semantic interpretations} in different fibers because inner products are metric-dependent:
\begin{equation}
    \langle v, w \rangle_{\text{at fiber } t} = v^\top g_t \, w \;\neq\; \langle v, w \rangle_{\text{at fiber } s} = v^\top g_s \, w
\end{equation}
\end{definition}

The \emph{connection} on the bundle, the rule for comparing vectors across fibers, is given by a parallel transport operator $P_{s \to t}$ that approximately preserves the metric:
\begin{equation}
    P_{s \to t}^\top \, g_t \, P_{s \to t} \;\approx\; g_s
\end{equation}
The deviation from exact preservation measures the \emph{curvature} of the connection; the antisymmetric part of the connection measures \emph{torsion}.

\paragraph{On the terminology.} The name ``Fiber Bundle Transformer'' is a \emph{structural analogy} to the differential-geometric notion of a fiber bundle, not a strict construction in the sense of, e.g., Kobayashi--Nomizu. In particular, we do not specify a structure group, transition functions between local trivializations, or a smooth atlas on the base space; the discrete base $\mathcal{B} = \{1,\ldots,L\}$ does not carry a smooth structure. What we retain from the differential-geometric picture is the structural intuition, independent fibers carrying different per-position metrics with a connection defining cross-fiber transport, and this intuition is accurately captured. The curvature and torsion objects we introduce below (Phases B, C) are correspondingly \emph{proxies} motivated by their differential-geometric counterparts, not the Riemann and torsion tensors of a true connection on a smooth bundle. A rigorous differential-geometric formulation of the construction (specifying a structure group, e.g.\ $\mathrm{GL}(d)$, and a discrete connection compatible with the per-token metrics) is left as open work (Open Problem O6).

Each Fiber Bundle Transformer layer consists of five core phases (A--E), with an optional enhanced projection (Phase A++), operating on the hidden state $x \in \mathbb{R}^{B \times L \times d}$ as the persistent residual stream, with geometric data $(U_t, P_t, T_t)$ generated ephemerally at each layer from $x$.

\subsection{Phase A: Metric Generation (MetricNet)}
\label{sec:metricnet}

\begin{definition}[MetricNet]
MetricNet is a two-layer MLP that maps each token's hidden state to a low-rank metric factor:
\begin{equation}
    U_t = \mathrm{MetricNet}(h_t) \in \mathbb{R}^{d \times r}, \qquad g_t = I_d + U_t U_t^\top
\end{equation}
The MLP architecture is $\text{Linear}(d, d \cdot m) \to \mathrm{SiLU} \to \text{Linear}(d \cdot m, d \cdot r)$, where $m$ is an internal expansion ratio (typically $m = 1$) and $r$ is the metric rank (typically $r \in \{4, 8\}$).
\end{definition}

\paragraph{Design rationale.} MetricNet generates the per-token metric $g_t$ from the token's own hidden state $h_t$, making the geometry \emph{content-dependent}. A token in a financial context generates a metric that stretches the ``money'' direction; the same token in a geographical context generates a metric that stretches the ``river'' direction. This is the mechanism by which the model exploits the $O(d \cdot r)$ degrees of freedom identified in Proposition~\ref{prop:expressivity}.

\paragraph{Initialization.} The final layer of MetricNet should be initialized with small weights (e.g., $\mathcal{N}(0, \sigma^2/d)$ with moderate $\sigma$), so that $U_t \approx 0$ at initialization and $g_t \approx I_d$ (Euclidean). This ensures the architecture starts in the Euclidean regime (distance-based attention with $g \approx I$, close to a standard Transformer under key-norm normalization; see Section~\ref{sec:blending}) and gradually learns non-Euclidean geometry, avoiding training instability from large initial metric perturbations.

\subsection{Phase A++: Metric-Aware Projection}
\label{sec:metricproj}

In the enhanced variant, Q/K/V projections incorporate metric information directly, making the query/key representations live in a curved coordinate system:

\begin{equation}
    Q_i^{\mathrm{geo}} = W_Q h_i + \gamma \cdot \frac{1}{\sqrt{d}} \, U_i U_i^\top (W_Q h_i)
\end{equation}
where $\gamma = \sigma(\gamma_{\mathrm{param}})$ is a learned gating parameter initialized small ($\gamma_{\mathrm{param}} = -3 \Rightarrow \gamma \approx 0.047$). The second term adds the component of $W_Q h_i$ lying in the column space of $U_i$, the metric's preferred directions. This is motivated by the fact that the squared geodesic distance under $g_{ij}$ takes the form $d_g^2 = (Q_i - K_j)^\top g_{ij} (Q_i - K_j) = \|g_{ij}^{1/2}(Q_i - K_j)\|^2$, and the metric-aware projection approximates the $g_{ij}^{1/2}$ correction (expanded to first order as $g_{ij}^{1/2} \approx I + \tfrac{1}{2} U_i U_i^\top$ for small $U_i$) without computing a matrix square root.

\subsection{Phase B: Curvature Proxy}
\label{sec:curvature}

The curvature of a true fiber-bundle connection measures the path-dependence of parallel transport. We do not have a true connection here (see the caveat in Section~\ref{sec:fiberbundle}); instead, we define a per-token \emph{curvature proxy} that captures the same intuition, that tokens whose metric deviates strongly from the Euclidean baseline inhabit a more curved region of the representational space:

\begin{equation}
    \Omega_t = \softplus(\beta) \cdot \|U_t\|_F^2
\end{equation}
where $\beta$ is a learned scale parameter.

\paragraph{On the proxy.} We are explicit that $\Omega_t$ is a \emph{heuristic scalar} and \emph{not} the Riemannian sectional curvature of the metric $g_t = I + U_t U_t^\top$. The sectional curvature of such a rank-$r$ perturbation of the Euclidean metric is a tensor-valued quantity depending on $U_t$ and its derivatives with respect to the base coordinate; it does not reduce to a scalar function of $\|U_t\|_F^2$ alone. The proxy is motivated by the qualitative observation that, for metrics of the form $g = I + UU^\top$, the magnitude of the metric perturbation (hence the deviation from flatness) is controlled by $\|U\|_F^2$. Since $\|U_t\|_F^2 \geq 0$ already and $\softplus(\beta) > 0$, the proxy $\Omega_t$ is non-negative (matching the intuition that curvature magnitude is non-negative); the $\softplus$ on $\beta$ ensures a strictly positive learned scale factor. The dependence on $\|U_t\|_F^2$ means tokens with large metric perturbation receive a stronger curvature penalty in attention (Phase D). A scalar proxy of this form is sufficient for the architectural role we ask it to play (a per-token attention bias in Phase D); replacing it with a tensorial curvature object is left as a refinement (Open Problem O6).

\subsection{Phase C: Torsion Generation (TorsionNet)}
\label{sec:torsion}

\begin{definition}[TorsionNet]
TorsionNet generates low-rank factors $L_t, R_t \in \mathbb{R}^{d \times r_T}$ from the hidden state via a separate MLP. The per-token torsion tensor is:
\begin{equation}
    T_t = L_t R_t^\top - R_t L_t^\top \in \mathfrak{so}(d), \qquad T_t^\top = -T_t
\end{equation}
The antisymmetry $T_t^\top = -T_t$ is structural: it ensures that the torsion-induced correction to transport is \emph{directional}, encoding $A \to B \neq B \to A$.
\end{definition}

\paragraph{Design rationale.} In standard Transformers, attention is symmetric in the sense that the similarity of $i$ to $j$ and $j$ to $i$ are computed by the same dot product (modulo softmax normalization). Torsion breaks this symmetry: the transport of information from $j$ to $i$ can differ from $i$ to $j$, reflecting that semantic relationships are inherently directional (``$A$ causes $B$'' is not the same as ``$B$ causes $A$''). The antisymmetric construction $T = LR^\top - RL^\top$ is the canonical way to generate an element of $\mathfrak{so}(d)$ from low-rank factors, ensuring the torsion never has a symmetric component.

\paragraph{On the roles of $\Delta_i$ and $T_i$.} Both the connection correction $\Delta_i = A_i B_i^\top - B_i A_i^\top$ (Phase D) and the torsion $T_i = L_t R_t^\top - R_t L_t^\top$ (Phase C) are antisymmetric matrices in $\mathfrak{so}(d)$, and from a purely algebraic standpoint they enter the transport $P_i$ in the same way. Their distinction is \emph{functional}, not algebraic: $\Delta_i$ is the default transport correction applied to every token, while $T_i$ is an \emph{additional} directionality-sensitive correction gated by $\gamma_T$ (recommended initialized small). In a true differential-geometric connection, torsion is the antisymmetric part of the connection and is structurally distinct from the connection itself; here, where both are heuristic proxies (Section~\ref{sec:fiberbundle}), the two modules serve as independent learned channels for the transport correction, one content-general ($\Delta_i$) and one directionality-specialized ($T_i$). Collapsing them into a single module is a valid simplification; we retain both to make the geometric analogy explicit and to allow the directionality signal to be isolated, probed, and ablated independently.

\subsection{Phase D: Geodesic Attention}
\label{sec:geodesicattn}

Attention computes geodesic distances with curvature modulation:

\begin{equation}
\label{eq:geodesicattn}
    \alpha_{ij} = \softmax_j\!\left(-\frac{d_g^2(\mathbf{q}_i, \mathbf{k}_j) + \Omega_i + \Omega_j}{\tau}\right)
\end{equation}
where $d_g^2$ is given by Equation~\eqref{eq:geodecomp} and $\Omega_t$ is the curvature proxy from Phase B. The temperature $\tau = \sqrt{d/h}$ matches standard scaled dot-product attention.

The curvature terms $\Omega_i + \Omega_j$ act as an \emph{attention penalty}: tokens in high-curvature regions (semantically complex) require higher raw similarity to attend to each other. This implements the geometric intuition that curved regions of the semantic manifold are ``harder to traverse''; information transport across them should be more selective.

After aggregation, values are transported via the connection:
\begin{equation}
    \mathrm{output}_i = P_i \cdot \left(\sum_j \alpha_{ij} \mathbf{v}_j\right)
\end{equation}
The transport operator $P_i = I_{d} + \Delta_i$ is generated by a FiberTransport module, where $\Delta_i = A_i B_i^\top - B_i A_i^\top$ is antisymmetric. When torsion is active, the transport becomes $P_i' = P_i + \gamma_T \cdot T_i$.

\paragraph{Isometry analysis.} The antisymmetry $\Delta_i^\top = -\Delta_i$ ensures that $P_i$ is \emph{approximately} orthogonal, but not exactly:
\begin{equation}
\label{eq:isometryerr}
    P_i^\top P_i = (I + \Delta_i^\top)(I + \Delta_i) = I + \underbrace{\Delta_i + \Delta_i^\top}_{=\,0} + \Delta_i^\top \Delta_i = I + \Delta_i^\top \Delta_i.
\end{equation}
The error matrix $\Delta_i^\top \Delta_i \succeq 0$ has rank at most $2r_T$ (the rank of $\Delta_i$) and Frobenius norm $\|\Delta_i^\top \Delta_i\|_F = \|\Delta_i\|_F^2$. Hence $P_i$ is an isometry up to an additive positive-semidefinite error of spectral norm $\|\Delta_i^\top \Delta_i\|_{\mathrm{op}} \leq \|\Delta_i\|_{\mathrm{op}}^2 = O(\|\Delta_i\|_F^2)$. For $\|\Delta_i\|_F \ll 1$, $P_i$ is approximately orthogonal with second-order accuracy; for $\|\Delta_i\|_F = O(1)$, the deviation from isometry is $O(1)$ and the transport cannot be regarded as metric-preserving. A fully isometric transport can be recovered by replacing $P_i = I + \Delta_i$ with the Cayley transform $P_i = (I - \Delta_i/2)^{-1}(I + \Delta_i/2)$, which maps antisymmetric matrices into $\mathrm{SO}(d)$ exactly at the cost of an $O(d r_T^2)$ matrix solve; we adopt the linear form for simplicity and note the Cayley alternative as a refinement. We note that (near-)orthogonal constraints on attention parameters \cite{fei2022ovit} and (near-)invertible attention constructions \cite{kim2021lipschitz,zha2021invertible} are established techniques; the analysis above applies the same toolkit to the transport operator. The torsion correction $P_i' = P_i + \gamma_T T_i$ introduces an additional isometry violation of order $O(\gamma_T^2 \|T_i\|_F^2)$, which is small when $\gamma_T$ is initialized small (as we recommend).

\subsection{Phase E: Metric-Preconditioned FFN}
\label{sec:naturalffn}

The feed-forward sublayer replaces the Euclidean update step with a \emph{metric-preconditioned} step inspired by the Riemannian-gradient construction:

\begin{equation}
\label{eq:naturalgrad}
    h_t \leftarrow h_t + g_t^{-1} \cdot \mathrm{FFN}(h_t)
\end{equation}
where $g_t^{-1}$ is computed via the Woodbury identity (Proposition~\ref{prop:woodbury}) in $O(d \cdot r^2)$.

\paragraph{Terminology and rationale.} Amari's \emph{natural gradient} \cite{amari1998natural} is defined in the \emph{parameter space} of a statistical model, with the metric given by the Fisher information. The update in Equation~\eqref{eq:naturalgrad} is not a natural gradient in this strict sense: it operates in the \emph{representation space} $\mathbb{R}^d$, with the metric given by the learned $g_t$ rather than a Fisher matrix. We therefore refer to it as a \emph{metric-preconditioned} update, or equivalently a \emph{Riemannian-gradient-inspired} update, following the same geometric intuition: on a Riemannian manifold with metric $g$, the direction of steepest ascent of a function $f$ is $g^{-1}\nabla f$, not $\nabla f$.

By applying $g_t^{-1}$ to the FFN output, the update respects the local geometry. We emphasize that with the chosen parameterization $g_t = I_d + U_t U_t^\top \succeq I_d$, the metric only \emph{stretches} directions (all eigenvalues are $\geq 1$); there are no ``compressed'' directions, so the general Riemannian-gradient intuition (that $g^{-1}$ amplifies directions with small eigenvalues of $g$) does not directly apply here. Concretely, directions in the column space of $U_t$ (eigenvalues $1 + \sigma_i^2 > 1$, stretched by $g_t$) are correspondingly \emph{shrunk} by $g_t^{-1} \preceq I_d$ and receive \emph{smaller} updates, while directions in the null space of $U_t^\top$ (eigenvalue $1$, unstretched) receive unchanged updates. The net effect is to down-weight FFN updates along the metric's preferred (stretched) directions. This is the geometric analogue of adaptive learning rates (cf.\ Adam's per-parameter scaling), but operating in the \emph{representation space} rather than the \emph{parameter space}. Whether this representation-space preconditioning accelerates training convergence in practice, relative to a Euclidean FFN, is an open empirical question (Open Problem O4).

\subsection{Hybrid Inference Blending}
\label{sec:blending}

For practical deployment, the geometric correction can be blended with standard Euclidean attention:

\begin{equation}
\label{eq:blending}
    s_{ij}^{\mathrm{hybrid}} = s_{ij}^{\mathrm{Euc}} + \alpha \cdot \Delta s_{ij}^{\mathrm{geo}}
\end{equation}
where $s_{ij}^{\mathrm{Euc}} = \mathbf{q}_i^\top \mathbf{k}_j / \sqrt{d}$ is the standard score, $\Delta s_{ij}^{\mathrm{geo}} = s_{ij}^{\mathrm{Riem}} - s_{ij}^{\mathrm{Euc}}$ is the metric correction, and $\alpha \in [0, 1]$ is a blending coefficient. At $\alpha = 0$, the model reduces to a standard Transformer; at $\alpha = 1$, it is fully Riemannian. This enables a smooth interpolation between Euclidean and Riemannian attention, which is critical for the training/inference consistency analysis in Section~\ref{sec:predictions}.

\paragraph{On the two score parameterizations.} The Riemannian score (Equation~\eqref{eq:riemscore}) is \emph{distance-based}, $s_{ij}^{\mathrm{Riem}} = -d_g^2(\mathbf{q}_i, \mathbf{k}_j)/\tau$, whereas the Euclidean baseline in the blending formula is \emph{dot-product-based}, $s_{ij}^{\mathrm{Euc}} = \mathbf{q}_i^\top \mathbf{k}_j / \sqrt{d}$. These two parameterizations are \emph{not directly comparable}: the distance form contains terms $-\|\mathbf{q}_i\|^2/\tau$ and $-\|\mathbf{k}_j\|^2/\tau$ that have no counterpart in the dot-product form. Specifically, with $\tau = \sqrt{d/h}$ (matching standard per-head scaling),
\begin{equation}
    s_{ij}^{\mathrm{Riem}} - s_{ij}^{\mathrm{Euc}} = \underbrace{-\frac{\|\mathbf{q}_i\|^2}{\tau}}_{\text{constant in row } i} \;-\; \underbrace{\frac{\|\mathbf{k}_j\|^2}{\tau}}_{\text{key-norm penalty}} \;+\; \underbrace{\frac{2\,\mathbf{q}_i^\top \mathbf{k}_j}{\tau} - \frac{\mathbf{q}_i^\top \mathbf{k}_j}{\sqrt{d}}}_{\text{scaling mismatch}} \;+\; \underbrace{\text{metric terms}}_{\text{genuine }\Delta S}.
\end{equation}
The $-\|\mathbf{k}_j\|^2/\tau$ term is a \emph{key-norm penalty}: keys with large norms receive uniformly lower attention scores from all queries, regardless of semantic similarity. This term has nothing to do with Riemannian geometry: it is an artifact of mixing distance-based and dot-product-based parameterizations. The scaling mismatch term vanishes only when $\tau = \sqrt{d}$ (which holds when $h=1$, i.e., single-head attention; for multi-head attention with $h > 1$, $\tau = \sqrt{d/h} \neq \sqrt{d}$, introducing an additional discrepancy). Consequently, $\Delta s_{ij}^{\mathrm{geo}}$ in Equation~\eqref{eq:blending} conflates three distinct effects: (i)~genuine metric corrections, (ii)~a key-norm bias, and (iii)~a temperature-scaling mismatch. For Proposition~\ref{prop:consistency}, this means that varying $\alpha$ changes not only the geometric contribution but also the key-norm penalty and the effective temperature; any empirical test of $\alpha$-consistency must disentangle these effects, e.g., by using key normalization or by defining the Euclidean baseline in distance form $s_{ij}^{\mathrm{Euc}} = -\|\mathbf{q}_i - \mathbf{k}_j\|^2/\tau$ to match the Riemannian parameterization.

\section{Theoretical Predictions and Open Conjectures}
\label{sec:predictions}

The architecture of Section~\ref{sec:architecture} gives rise to several formal predictions about the behavior of correctly implemented geometric Transformers. These predictions are \emph{mathematical consequences} of the architecture, not empirical observations, and they identify the specific conditions under which the geometric approach can succeed or fail.

\subsection{Prediction 1: Curvature Heterogeneity as Optimization Consequence}

\begin{conjecture}[Emergent Curvature Heterogeneity]
\label{conj:hetero}
When a Fiber Bundle Transformer is trained on natural language data with a cross-entropy objective and the metric diversity regularizer of Section~\ref{sec:regularization}, the learned per-token curvatures $\kappa_t = \|U_t\|_F^2$ will be \emph{heterogeneous}: the distribution of $\kappa_t$ across token positions will have non-negligible variance, and this variance will be \emph{positively correlated} with semantic complexity (as measured by token type, contextual ambiguity, or information content).
\end{conjecture}

\paragraph{Theoretical basis.} By Theorem~\ref{thm:nongram}, homogeneous metrics ($\kappa_t$ constant across $t$) reduce to a global metric, under which the rank collapse mechanism of Dong et al.\ applies. The cross-entropy objective penalizes rank collapse (collapsed representations have high perplexity), so there exists a \emph{weak} pressure toward heterogeneous metrics, i.e.\ those that avoid collapse. We emphasize that this pressure is necessary but \emph{not} sufficient: as Conjecture~\ref{conj:collapse} makes explicit, without the geometric regularizers of Section~\ref{sec:regularization} the dominant failure mode is metric \emph{collapse} ($U_t \to 0$), because the standard Euclidean Transformer is itself a global minimizer of cross-entropy and the optimizer can route around the geometric modules entirely. The role of the regularizers is to counteract this collapse pressure and \emph{allow} the CE-driven heterogeneity pressure to manifest. The specific correlation with semantic complexity then follows from the expressivity argument (Proposition~\ref{prop:expressivity}): semantically complex tokens benefit more from the additional $O(d \cdot r)$ degrees of freedom, so, once collapse is prevented, the optimizer allocates larger $\kappa_t$ to them.

\paragraph{Falsifiable prediction.} If one trains a Fiber Bundle Transformer and measures $\Var_t[\kappa_t]$, it should be significantly above zero, and $\kappa_t$ should be higher for content words than function words. If $\Var_t[\kappa_t] \approx 0$ (homogeneous metrics), the architecture has collapsed to a global-metric Transformer and the geometric advantage is lost.

\subsection{Prediction 2: Train/Inference Consistency Requirement}

We replace what would naively be stated as a global theorem with a \emph{local} result based on standard parametric-optimization tools. The naive statement, that evaluating at $\alpha_{\mathrm{infer}} \neq \alpha_{\mathrm{train}}$ strictly increases loss, would require global convexity of $\mathcal{L}$ in $W_Q$, which is false for Transformers. The correct statement is \emph{relative}: training at $\alpha_{\mathrm{infer}}$ would do strictly better than reusing the parameters trained at $\alpha_{\mathrm{train}}$.

\begin{proposition}[Generic Suboptimality Under Blending Mismatch]
\label{prop:consistency}
Consider the hybrid attention score $s_{ij}^{\mathrm{hybrid}} = s_{ij}^{\mathrm{Euc}} + \alpha \cdot \Delta s_{ij}^{\mathrm{geo}}$ (Equation~\eqref{eq:blending}). Let $\mathcal{L}(W_Q, \alpha)$ denote the expected loss with query weights $W_Q$ and blending $\alpha$, and let $W_Q^*(\alpha)$ denote any strict local minimizer of $\mathcal{L}(\cdot, \alpha)$. Assume:
\begin{enumerate}[label=(\textsc{a}\arabic*)]
    \item \label{ass:smooth} $\mathcal{L}(W_Q, \alpha)$ is twice continuously differentiable in $(W_Q, \alpha)$ in a neighborhood of $(W_Q^*(\alpha_{\mathrm{train}}), \alpha_{\mathrm{train}})$;
    \item \label{ass:strictmin} $W_Q^* \coloneqq W_Q^*(\alpha_{\mathrm{train}})$ is a strict local minimum of $\mathcal{L}(\cdot, \alpha_{\mathrm{train}})$ with positive-definite Hessian $H \coloneqq \nabla^2_{W_Q} \mathcal{L}|_{(W_Q^*, \alpha_{\mathrm{train}})} \succ 0$;
    \item \label{ass:nontriv} the cross-derivative $c \coloneqq \nabla_{W_Q}\partial_\alpha \mathcal{L}|_{(W_Q^*, \alpha_{\mathrm{train}})}$ is nonzero (i.e., the gradient of the loss with respect to $W_Q$ genuinely depends on $\alpha$).
\end{enumerate}
Then there exists $\epsilon_0 > 0$ such that for all $\alpha_{\mathrm{infer}}$ with $0 < |\alpha_{\mathrm{infer}} - \alpha_{\mathrm{train}}| < \epsilon_0$:
\begin{equation}
\label{eq:consistency}
    \mathcal{L}\big(W_Q^*(\alpha_{\mathrm{train}}), \, \alpha_{\mathrm{infer}}\big) \;>\; \mathcal{L}\big(W_Q^*(\alpha_{\mathrm{infer}}), \, \alpha_{\mathrm{infer}}\big).
\end{equation}
That is, the parameters optimal for $\alpha_{\mathrm{train}}$ are \emph{strictly suboptimal} for $\alpha_{\mathrm{infer}} \neq \alpha_{\mathrm{train}}$ relative to retraining at $\alpha_{\mathrm{infer}}$. Equivalently, evaluating a model trained at $\alpha_{\mathrm{train}}$ under a different $\alpha_{\mathrm{infer}}$ is generically worse than retraining at $\alpha_{\mathrm{infer}}$.
\end{proposition}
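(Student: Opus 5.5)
The plan is to use the implicit function theorem to build a smooth branch of local minimizers $\alpha \mapsto W_Q^*(\alpha)$ through $(W_Q^*, \alpha_{\mathrm{train}})$, and then to show that $W_Q^*$ is not a stationary point of $\mathcal{L}(\cdot, \alpha_{\mathrm{infer}})$ for nearby $\alpha_{\mathrm{infer}} \neq \alpha_{\mathrm{train}}$. Since the branch point is a strict local minimizer of $\mathcal{L}(\cdot,\alpha_{\mathrm{infer}})$ in a neighborhood containing $W_Q^*$, evaluating the loss at $W_Q^*$ then gives a strictly larger value. Throughout, $W_Q^*(\alpha_{\mathrm{infer}})$ is interpreted as the branch minimizer; the statement allows "any strict local minimizer", and I would make explicit that the comparison is with the continuation minimizer.

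\textbf{Step 1 (branch of minimizers).} Set $F(W, \alpha) = \nabla_{W}\mathcal{L}(W,\alpha)$. This map is $C^1$ by \ref{ass:smooth}, and $F(W_Q^*, \alpha_{\mathrm{train}}) = 0$. Its $W$-Jacobian at that point is $H \succ 0$ by \ref{ass:strictmin}, so it is invertible. The implicit function theorem then gives $\delta>0$ and a $C^1$ map $\alpha \mapsto W(\alpha)$ on $|\alpha-\alpha_{\mathrm{train}}|<\delta$ such that $F(W(\alpha),\alpha)=0$ and $W(\alpha_{\mathrm{train}})=W_Q^*$. Moreover $W(\alpha)$ is the unique zero of $F(\cdot,\alpha)$ in a fixed ball $B$ around $W_Q^*$. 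Its derivative is $W'(\alpha_{\mathrm{train}}) = -H^{-1}c$. By continuity of the Hessian, $\nabla^2_W\mathcal{L} \succ 0$ on $B\times(\alpha_{\mathrm{train}}-\delta,\alpha_{\mathrm{train}}+\delta)$ after shrinking $B$ and $\delta$. Hence $\mathcal{L}(\cdot,\alpha)$ is strictly convex on the convex ball $B$, and $W(\alpha)$ is its unique minimizer there. Set $W_Q^*(\alpha):=W(\alpha)$.

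\textbf{Step 2 (the old parameters are no longer stationary).} Expand $F(W_Q^*,\alpha) = F(W_Q^*,\alpha_{\mathrm{train}}) + (\alpha-\alpha_{\mathrm{train}})\,c + o(|\alpha-\alpha_{\mathrm{train}}|)$. The first term vanishes and $c\neq 0$ by \ref{ass:nontriv}. So for $0<|\alpha-\alpha_{\mathrm{train}}|<\epsilon_0$, with $\epsilon_0 \le \delta$ small enough, $F(W_Q^*,\alpha)\neq 0$. Therefore $W_Q^* \neq W(\alpha)$. The same conclusion also follows from $W'(\alpha_{\mathrm{train}}) = -H^{-1}c \neq 0$.

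\textbf{Step 3 (strict inequality).} Since $W_Q^* \in B$ and $W_Q^* \neq W(\alpha_{\mathrm{infer}})$, strict convexity on $B$ gives $\mathcal{L}(W_Q^*,\alpha_{\mathrm{infer}}) > \mathcal{L}(W(\alpha_{\mathrm{infer}}),\alpha_{\mathrm{infer}})$, which is \eqref{eq:consistency}. As a quantitative check, a second-order expansion gives a gap of $\tfrac12(\alpha_{\mathrm{infer}}-\alpha_{\mathrm{train}})^2\, c^\top H^{-1} c + o((\alpha_{\mathrm{infer}}-\alpha_{\mathrm{train}})^2)$, which is positive.

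\textbf{Main obstacle.} The delicate point is Step 1's uniformity: a single neighborhood $B$ must work for all nearby $\alpha$, and it must contain both $W_Q^*$ and $W(\alpha)$. Without this, strict local minimality alone gives no comparison between the two points. I would handle it by taking $B$ from joint continuity of the Hessian before invoking the implicit function theorem, and then shrinking $\epsilon_0$ so that $W(\alpha)\in B$.
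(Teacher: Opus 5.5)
Your proposal is correct and follows the paper's proof. Both apply the implicit function theorem to $F=\nabla_{W}\mathcal{L}$, separate $W_Q^*$ from the continued minimizer via $-H^{-1}c\neq 0$ (or, as you also note, via non-stationarity of $W_Q^*$ at $\alpha_{\mathrm{infer}}$), and conclude from strict minimality along the branch. Your uniform positive-definiteness and strict-convexity argument on a fixed ball $B$ supplies the step the paper glosses when it claims the strict inequality on all of $\mathcal{N}_W$ from strict local minimality alone, and your reading of $W_Q^*(\alpha_{\mathrm{infer}})$ as the continuation minimizer is exactly what the paper's proof uses.
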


\begin{proof}
By assumption \ref{ass:strictmin}, the first-order condition $\nabla_{W_Q}\mathcal{L}(W_Q^*, \alpha_{\mathrm{train}}) = 0$ holds and the Hessian $H \succ 0$. By the implicit function theorem applied to the system $F(W_Q, \alpha) \coloneqq \nabla_{W_Q}\mathcal{L}(W_Q, \alpha) = 0$ at the point $(W_Q^*, \alpha_{\mathrm{train}})$, using assumption \ref{ass:smooth} and the invertibility of $H = \partial F / \partial W_Q$ from \ref{ass:strictmin}, there exist neighborhoods $\mathcal{N}_\alpha \ni \alpha_{\mathrm{train}}$ and $\mathcal{N}_W \ni W_Q^*$ and a unique $C^1$ map $\widehat W_Q : \mathcal{N}_\alpha \to \mathcal{N}_W$ such that $\widehat W_Q(\alpha_{\mathrm{train}}) = W_Q^*$ and $F(\widehat W_Q(\alpha), \alpha) = 0$ for all $\alpha \in \mathcal{N}_\alpha$. The derivative of this map at $\alpha_{\mathrm{train}}$ is
\begin{equation}
\label{eq:iftp}
    \frac{d\widehat W_Q}{d\alpha}\bigg|_{\alpha_{\mathrm{train}}} = -H^{-1} c.
\end{equation}
By assumption \ref{ass:nontriv}, $c \neq 0$, and since $H^{-1}$ is invertible, $d\widehat W_Q/d\alpha|_{\alpha_{\mathrm{train}}} \neq 0$. Hence for $\alpha_{\mathrm{infer}}$ in a sufficiently small punctured neighborhood of $\alpha_{\mathrm{train}}$, $\widehat W_Q(\alpha_{\mathrm{infer}}) \neq W_Q^*$.

By continuity of the Hessian (assumption \ref{ass:smooth}), $\widehat W_Q(\alpha_{\mathrm{infer}})$ is a strict local minimizer of $\mathcal{L}(\cdot, \alpha_{\mathrm{infer}})$ for $\alpha_{\mathrm{infer}}$ close to $\alpha_{\mathrm{train}}$, with positive-definite Hessian $H(\alpha_{\mathrm{infer}})$. Strictness of the local minimum gives
\[
\mathcal{L}(W, \alpha_{\mathrm{infer}}) > \mathcal{L}(\widehat W_Q(\alpha_{\mathrm{infer}}), \alpha_{\mathrm{infer}}) \quad \text{for all } W \neq \widehat W_Q(\alpha_{\mathrm{infer}}) \text{ in } \mathcal{N}_W.
\]
Since $W_Q^* = \widehat W_Q(\alpha_{\mathrm{train}}) \neq \widehat W_Q(\alpha_{\mathrm{infer}})$ for $\alpha_{\mathrm{infer}} \neq \alpha_{\mathrm{train}}$ sufficiently close, the inequality~\eqref{eq:consistency} follows.
\end{proof}

\begin{remark}[On the assumptions]
The assumptions are mild and standard in parametric optimization; none of them require global convexity. \ref{ass:smooth} holds for any differentiable attention/loss combination. \ref{ass:strictmin} is the standard second-order sufficient condition for a strict local minimum and is the natural notion of ``a successfully trained model'', asserting only that the trained parameters sit at a strict local (not global) minimum, with positive-definite Hessian. \ref{ass:nontriv} is a genericity condition: it fails only on a measure-zero set of degenerate problems where the optimal $W_Q$ is locally independent of $\alpha$ (for instance, when the geometric correction $\Delta s^{\mathrm{geo}}$ is identically zero, in which case the architecture has collapsed to a Euclidean Transformer anyway).

We emphasize what the proposition does \emph{not} claim: it does not compare $\mathcal{L}(W_Q^*(\alpha_{\mathrm{train}}), \alpha_{\mathrm{infer}})$ with $\mathcal{L}(W_Q^*(\alpha_{\mathrm{train}}), \alpha_{\mathrm{train}})$; depending on the loss landscape, evaluating at $\alpha_{\mathrm{infer}}$ may even \emph{lower} the loss. The valid conclusion is the \emph{relative} one: retraining at $\alpha_{\mathrm{infer}}$ would do strictly better than reusing the $\alpha_{\mathrm{train}}$ parameters, so a mismatched-$\alpha$ evaluation is confounded.
\end{remark}

\begin{remark}[Practical consequence]
Proposition~\ref{prop:consistency} implies that \emph{any empirical evaluation of a blended Riemannian attention model must use the same $\alpha$ at inference as was used during training}, or alternatively must retrain at the inference $\alpha$. Evaluating at a different $\alpha$ without retraining produces a confounded result: the model is generically suboptimal at the mismatched $\alpha$, so performance degradation may reflect the $\alpha$ mismatch rather than a failure of the geometric approach. This is a local-optimality statement, not a global one; it suffices for the experimental-protocol implication.
\end{remark}

\subsection{Prediction 3: Metric Collapse as Dominant Failure Mode}

\begin{conjecture}[Metric Collapse]
\label{conj:collapse}
When a Fiber Bundle Transformer is trained with standard regularization ($\ell_2$ penalties on $\|U_t\|$, or no geometric regularization), the dominant failure mode is \emph{metric collapse}: the learned metric factors satisfy $U_t \to 0$ for all $t$, reducing $g_t \to I_d$. In the pure (non-blended) parameterization this yields \emph{Euclidean distance-based} attention, $s_{ij} = -\|\mathbf{q}_i - \mathbf{k}_j\|^2/\tau$, which coincides with standard dot-product attention only when key norms $\|\mathbf{k}_j\|$ are approximately constant (see the remark in Section~\ref{sec:blending}); in either case the geometric modules become dead parameters and the architectural advantage is lost.
\end{conjecture}

\paragraph{Theoretical basis.} The cross-entropy loss is minimized by a standard Euclidean Transformer (which is a special case of the Fiber Bundle Transformer at $U_t = 0$, up to the distance-vs-dot-product discrepancy noted in Section~\ref{sec:blending}). Without a force pushing $U_t$ away from zero, gradient descent converges to this trivial solution: the model ``routes around'' the geometric modules because the Euclidean solution is a local minimum of the loss. This is the geometric analogue of the ``lottery ticket'' phenomenon: the geometric modules exist but are never activated because the optimizer finds a Euclidean solution first.

\subsection{Regularization Design: Preventing Collapse}
\label{sec:regularization}

To prevent metric collapse (Conjecture~\ref{conj:collapse}), the loss function must include terms that push $U_t$ \emph{away} from zero and enforce \emph{diversity} across tokens. We propose three regularizers grounded in the theory:

\paragraph{(1) Anti-flatness (hard floor on metric norm).}
\begin{equation}
    \mathcal{L}_{\mathrm{anti}} = \frac{1}{BL}\sum_t \ReLU\!\left(\kappa_{\min} - \|U_t\|_F^2\right)
\end{equation}
This penalizes tokens whose metric has collapsed below a floor $\kappa_{\min} > 0$, ensuring the metric remains non-trivial. The ReLU ensures the penalty vanishes once the floor is met, allowing free optimization above it.

\paragraph{(2) Cross-token metric diversity.}
\begin{equation}
    \mathcal{L}_{\mathrm{div}} = -\frac{1}{|\mathcal{P}|}\sum_{(i,j) \in \mathcal{P}} \|U_i U_i^\top - U_j U_j^\top\|_F^2
\end{equation}
where $\mathcal{P}$ is a sampled subset of token pairs. This \emph{maximizes} the pairwise distance between the $d \times d$ metric perturbations $U_t U_t^\top$ (i.e.\ between the actual metrics $g_t = I + U_t U_t^\top$), driving different tokens to develop distinct geometric ``fingerprints.'' We use $U_t U_t^\top$ rather than $U_t^\top U_t$ (the $r \times r$ column Gram matrix) because $U_i^\top U_i = U_j^\top U_j$ does \emph{not} imply $g_i = g_j$: two factors with the same singular values but different column spaces yield identical $U^\top U$ yet different $UU^\top$, and it is $UU^\top$ that enters the metric. Despite the $d \times d$ appearance, the cost is only $O(r^2 d)$ per pair via the identity $\|U_i U_i^\top - U_j U_j^\top\|_F^2 = \|U_i^\top U_i\|_F^2 - 2\|U_i^\top U_j\|_F^2 + \|U_j^\top U_j\|_F^2$, which never materializes the $d \times d$ matrices. By Theorem~\ref{thm:nongram}, homogeneous metrics ($U_i = U_j$ for all $i,j$) reduce the architecture to a global-metric Transformer under which the rank collapse mechanism of Theorem~\ref{thm:dong} applies directly; metric heterogeneity is therefore a \emph{prerequisite} for any geometric anti-collapse effect, and this regularizer enforces that prerequisite.

\paragraph{(3) Curvature-smoothness (temporal consistency).}
\begin{equation}
    \mathcal{L}_{\mathrm{smooth}} = \frac{1}{B(L-1)}\sum_t \|U_t U_t^\top - U_{t+1} U_{t+1}^\top\|_F^2
\end{equation}
This ensures the metric $g_t = I + U_t U_t^\top$ varies \emph{smoothly} across adjacent positions, preventing the degenerate solution where each token has a random, uncorrelated metric (which would destroy the sequential structure of language). The same $O(r^2 d)$ identity applies.

\paragraph{Design principle.} These three regularizers encode a tradeoff: anti-flatness prevents collapse \emph{to zero}, diversity prevents collapse \emph{to a homogeneous metric}, and smoothness prevents collapse \emph{to noise}. Together, they steer the optimizer toward the heterogeneous, structured metric regime predicted by Conjecture~\ref{conj:hetero}. The regularization weights should be scheduled: strong diversity and anti-flatness during early training (to establish non-trivial geometry), then relaxed to allow task-specific specialization.

\paragraph{On the missing upper bound.} We note that the diversity regularizer $\mathcal{L}_{\mathrm{div}}$ (which \emph{maximizes} pairwise metric distance) can drive $\|U_t\|_F^2$ to grow without bound, since the anti-flatness term only enforces a \emph{floor} $\kappa_{\min}$ and provides no ceiling. Unbounded growth of $\bar U^2 = \max_t \|U_t\|_F^2$ pushes the metric into the super-critical regime $\bar U^2 \gg \tau/L$ of Proposition~\ref{prop:perturb}, where the perturbation bound becomes vacuous and training may destabilize. In practice, $\mathcal{L}_{\mathrm{div}}$ should be accompanied by an explicit upper bound, either a soft cap
\begin{equation}
\label{eq:cap}
    \mathcal{L}_{\mathrm{cap}} = \frac{1}{BL}\sum_t \ReLU\!\left(\|U_t\|_F^2 - \kappa_{\max}\right)
\end{equation}
with $\kappa_{\max}$ chosen near the critical threshold $\bar U^2_{\mathrm{crit}} = \Theta(\tau/L)$, or weight decay on the MetricNet parameters. The precise value of $\kappa_{\max}$ requires empirical calibration; we flag the absence of a principled ceiling as a limitation of the current regularization design.

\subsection{Prediction 4: Scaling with Metric Rank}

\begin{conjecture}[Rank--Expressivity Tradeoff]
\label{conj:scaling}
The anti-collapse benefit of Riemannian attention, as measured by the deep-layer effective rank, is an \emph{increasing function} of the metric rank $r$ and the metric diversity $\sigma_U^2$. Specifically, there exists a threshold $r_{\min}(d, L)$ below which the geometric correction is too weak to prevent collapse, and above which the effective rank is maintained at $\Omega(r)$.
\end{conjecture}

\paragraph{Theoretical basis (qualitative).} By Proposition~\ref{prop:expressivity}, the metric provides $d \cdot r$ degrees of freedom for distance modulation. The collapse mechanism of Dong et al.\ operates through $L$-dimensional spectral contraction: the rows of $A$ converge to a common stationary distribution. To resist this contraction across $L$ tokens, the metric must provide enough \emph{independent} perturbations to disrupt the convergence. The natural scaling requirement is that $r$ should grow at least logarithmically in $L$ (since each independent metric direction disrupts a one-dimensional contraction mode, and the number of contraction modes grows with $L$). The dependence on $d$ is more subtle: higher $d$ provides more room per metric direction, but the contraction mechanism is $L$-dimensional, not $d$-dimensional, so the $d$-dependence enters only through the constants.

\paragraph{On the precise form of $r_{\min}$.} We deliberately do \emph{not} state a specific closed form for $r_{\min}(d, L)$ in terms of $\log L$ and $\log d$. A naive dimensional guess might propose $r_{\min} = \Theta(\log L / \log d)$, but the $\log d$ denominator is not justified by any derivation (the dimensional argument above gives at most a $\log L$ growth, with no principled $\log d$ reduction). We instead leave the precise functional form as an open question. A principled derivation would require the $\sigma_2$ lower-bound analysis of Conjecture~\ref{conj:rankstrong}, which is the open problem this framework identifies. The qualitative prediction, that $r_{\min}$ exists and is sublinear in $L$, is the testable content of this conjecture.

\subsection{Open Problems}

Several questions remain open and define the theoretical research agenda:

\begin{enumerate}[label=\textbf{O\arabic*.},leftmargin=*]
    \item \textbf{Formal rank preservation under directional conditions.} The corrected Conjecture~\ref{conj:rankstrong} requires bounding the singular values of the Riemannian attention matrix $A^{\mathrm{Riem}}$ away from the degenerate spectrum in the regime $\bar U^2 > \bar U^2_{\mathrm{crit}} = \Theta(\tau/L)$, given that the metric factors overlap the active query--key difference subspace and that the correction is contrastive across keys (Section~\ref{sec:counterexample}). A potential approach: model the contrastive metric correction as a structured random matrix and use free probability theory or random-matrix universality to bound the expected singular value distribution. The sub-critical regime is already settled by Proposition~\ref{prop:perturb}; a sufficient condition (permutation margins with summable leakage) is given by Proposition~\ref{prop:margin}.

    \item \textbf{Optimization landscape of metric parameters.} The loss landscape for $U_t$ is non-convex (due to the $U_t U_t^\top$ term). Does gradient descent on $U_t$ converge to the heterogeneous regime (Conjecture~\ref{conj:hetero}), or does it always collapse (Conjecture~\ref{conj:collapse}) without regularization? A mean-field analysis of the metric dynamics could answer this.

    \item \textbf{Torsion--semantics correspondence.} Is there a formal correspondence between the torsion tensor $T_t$ and directed semantic relations (causation, hierarchy, entailment)? If so, can torsion be \emph{probed} from trained models to extract relational structure?

    \item \textbf{Convergence of the metric-preconditioned FFN.} Does the metric-preconditioned update (Equation~\eqref{eq:naturalgrad}) improve optimization convergence relative to the Euclidean FFN? The natural gradient is known to accelerate convergence in parameter space \cite{amari1998natural,bonnabel2013stochastic}; whether this extends to a representation-space preconditioner with a learned (rather than Fisher) metric is an open question, requiring both theoretical and empirical analysis.

    \item \textbf{Metric rank selection.} Conjecture~\ref{conj:scaling} predicts that a threshold $r_{\min}$ exists and is sublinear in $L$, but does not specify its precise form. An information-theoretic bound on the optimal $r$ as a function of $d$, $L$, and the semantic complexity of the data would guide architecture design and resolve the question of whether any $\log d$ reduction in $r_{\min}$ has a principled basis.

    \item \textbf{Rigorous differential-geometric formulation.} The ``Fiber Bundle Transformer'' is currently a structural analogy, not a strict fiber-bundle construction (see Section~\ref{sec:fiberbundle}). Formalizing it as a true fiber bundle, specifying a structure group (e.g.\ $\mathrm{GL}(d)$ or $\mathrm{SO}(d)$), a discrete connection compatible with the per-token metrics, and the corresponding curvature and torsion tensors, would replace the heuristic proxies of Phases B--E with their tensorial counterparts and would clarify the geometric content of the architecture.
\end{enumerate}

\section{Related Work}
\label{sec:related}

\paragraph{Rank collapse in Transformers.}
Dong et al.\ \cite{dong2021attention} proved that pure self-attention stacks suffer doubly-exponential rank decay, and showed in the same work that residual connections and FFN sublayers partially mitigate but do not eliminate the collapse pressure. Our work identifies the Euclidean metric as a structural cause of this collapse and proposes per-token Riemannian metrics as a candidate remedy.

\paragraph{Geometric and Riemannian attention.}
The idea of equipping attention with non-Euclidean geometry has a growing literature. Ji \cite{ji2025riemannformer} proposed RiemannFormer, reframing attention as interactions on a curved manifold with a Riemannian metric induced by the token embeddings, including a parallel-transport connection, a tangent/fiber-bundle view of Q/K versus V, and dynamic metric scaling governed by the token embeddings. Di Sipio et al.\ \cite{disipio2025curved} developed the analogy between attention and parallel transport on a curved spacetime, with queries and keys inducing an effective metric and multi-head attention as an atlas of charts. Lin et al.\ \cite{lin2025cat} proposed per-token routing across geometric branches; Fu et al.\ \cite{fu2025manifoldformer} replaced Euclidean attention with geodesic-aware attention on learned manifolds; and Li \cite{li2026mahalanobis} constructed distance-based attention from a learnable Mahalanobis metric. The present work builds on these lines of research rather than introducing the geometric framing: its contributions are the specific low-rank parameterization $g_t = I + U_t U_t^\top$, the non-Gram and perturbation analyses of Section~\ref{sec:theory}, and the necessary/sufficient conditions for anti-collapse established in Section~\ref{sec:counterexample}.

\paragraph{Hyperbolic embeddings.}
Nickel \& Kiela \cite{nickel2017poincare} demonstrated that constant negative curvature improves hierarchy modeling in embedding spaces; hyperbolic attention \cite{gulcehre2019hyperbolic} and fully hyperbolic Transformers \cite{yang2024hypformer} extend this to attention. He et al.\ \cite{he2025helm} applied hyperbolic geometry to full LLMs with HELM, using Mixture-of-Curvature Experts where each expert has a different \emph{fixed} curvature. Constant-curvature approaches apply a single global curvature to all tokens; by contrast, learned per-token metrics, as in the present work and in the geometric attention frameworks above, let curvature vary per position. We additionally incorporate torsion for directional relations, and we characterize the conditions under which per-token geometry can affect attention at all (Section~\ref{sec:counterexample}).

\paragraph{Non-Euclidean foundation models.}
He et al.\ \cite{he2025position} argued that foundation models should embrace non-Euclidean geometries and provided the Nash embedding refutation we cite in Section~\ref{sec:theory}. Our work contributes an architectural mechanism (low-rank per-token metrics) toward this goal, together with an analysis of when such metrics are effective; we do not claim that geometric attention is itself novel, and empirical validation is future work.

\paragraph{Manifold-constrained connectivity.}
Xie et al.\ \cite{xie2025mhc} proposed manifold-constrained hyper-connections (mHC), constraining inter-layer residual mixing to lie on the manifold of doubly stochastic matrices. mHC addresses \emph{inter-layer} connection topology; our approach addresses \emph{intra-layer} representational geometry. The two are complementary: mHC stabilizes the residual stream across layers, while Riemannian metrics reshape attention within each layer.

\paragraph{Riemannian optimization and manifold-valued deep learning.}
Bonnabel \cite{bonnabel2013stochastic} established that stochastic gradient descent extends to Riemannian manifolds. Amari \cite{amari1998natural} introduced the natural gradient as the direction of steepest ascent on a statistical manifold. Our metric-preconditioned FFN (Section~\ref{sec:naturalffn}) is inspired by these ideas but applies the preconditioning in the \emph{representation space} of a Transformer with a learned (rather than Fisher) metric; we are explicit that this is not a natural gradient in the strict Amari sense. A broader line of work on manifold-valued neural networks, including networks with manifold-constrained parameters or activations on symmetric spaces, develops optimization tools that could be brought to bear on the present framework; we do not survey it in detail but note that the per-token metric $g_t = I + U_t U_t^\top$ lives on the SPD manifold, where standard Riemannian optimization techniques apply.

\paragraph{Anti-rank-collapse methods.}
Beyond the residual and FFN additions already present in standard Transformers, several architectural interventions have been proposed to combat dimensional collapse: NormFormer \cite{narang2021normformer} normalizes query/key/value magnitudes; ReZero \cite{bachlechner2021rezero} and related residual-scaling methods modify the strength of the skip connection. These methods address the symptoms (low effective rank) by adding normalization or capacity, whereas our approach intervenes on the geometric structure of the scores themselves (the per-token metric replacing the fixed Euclidean one). A systematic comparison of these approaches against Riemannian attention, in terms of both effective rank and downstream performance, is an important direction for empirical work.

\paragraph{Adaptive metrics in attention.}
ALiBi \cite{press2022alibi} and relative position embeddings add \emph{fixed}, content-independent biases to attention scores; the present framework can be viewed as learning content-dependent biases via the metric correction $\Delta S$ in Equation~\eqref{eq:deltas}. Unlike ALiBi, the Riemannian correction depends on both the token's own state (through $U_t$) and the token it attends to (through $g_{ij}$), giving a richer, pair-dependent structure. We note that the effective strength of such content-dependent biases is subject to the contrastive-condition analysis of Section~\ref{sec:counterexample}: biases that are constant across keys within a row are invisible to softmax.

\paragraph{Invertible, reversible, and orthogonal attention.}
Several established techniques overlap with components of the proposed architecture, and we do not claim them as novel. (Near-)invertible attention was analyzed by Kim et al.\ \cite{kim2021lipschitz} in the context of Lipschitz self-attention and formulated explicitly as an Invertible Attention module by Zha et al.\ \cite{zha2021invertible}. Reversible residual streams that preserve information across layers appear in Reformer \cite{kitaev2020reformer} and Reversible Vision Transformers \cite{mangalam2022reversible}. Orthogonal constraints on attention parameters were introduced by O-ViT \cite{fei2022ovit}, which restricts self-attention parameters to the orthogonal manifold via a surjective map from the Lie algebra. The approximately orthogonal transport operators and Cayley-type parameterizations used in Phase D are instances of this toolkit. Finally, strictly lower-triangular matrices and solves involving $I$ plus/minus such matrices already appear in modern causal/linear-attention implementations, including the DeltaNet-family kernels \cite{yang2024deltanet}; the elementary fact that $I + N$ is invertible for strictly lower-triangular $N$ is standard, and we claim no priority for it.

\paragraph{Products of stochastic matrices.}
The asymptotic behavior of products of row-stochastic matrices is classical: Leizarowitz \cite{leizarowitz1992infinite} and Wolfowitz \cite{wolfowitz1963products} established consensus conditions for such products, which underpin both Dong et al.'s rank-collapse result and the causal audit of Remark~\ref{rem:causal}. Our permutation-margin sufficient condition (Proposition~\ref{prop:margin}) can be seen as an instance of this theory specialized to near-permutation attention matrices.

\paragraph{Parameter-efficient adaptation.}
LoRA \cite{hu2022lora} and adapters \cite{houlsby2019parameter} add trainable modules to frozen models. MetricNet can be viewed through this lens, as it adds a trainable module that modifies attention geometry, but differs in that it modifies the \emph{metric structure} of the computation rather than adding capacity in weight space.

\section{Discussion and Conclusion}
\label{sec:conclusion}

\subsection{Summary of Contributions}

We have presented a theoretical framework for Riemannian attention in Transformers, consisting of:

\begin{enumerate}[label=\arabic*.,leftmargin=*]
    \item \textbf{The non-Gram property and perturbation analysis} (Theorem~\ref{thm:nongram}, Proposition~\ref{prop:perturb}, Section~\ref{sec:counterexample}): In the non-degenerate regime, per-token heterogeneous Riemannian metrics render the attention score matrix non-Gram: it cannot be factored as $QK^\top$ with $O(d)$-dimensional factors. We are explicit that this is a structural observation, not a proof of rank preservation: the Riemannian attention matrix $A^{\mathrm{Riem}} = \softmax(S^{\mathrm{Riem}})$ remains row-stochastic, and the core collapse mechanism (products of row-stochastic matrices converging to rank-1) does not depend on the Gram property. The perturbation analysis (Section~\ref{sec:perturbation}) identifies a critical regime $\bar U^2_{\mathrm{crit}} = \Theta(\tau/L)$: below it, the geometric correction is provably too weak; above it, standard perturbation tools break down. Section~\ref{sec:counterexample} establishes that raw metric strength and diversity are insufficient for anti-collapse and that directional conditions (overlap with the active query--key difference subspace, contrastive non-row-constant corrections) are necessary; a corrected conjecture (Conjecture~\ref{conj:rankstrong}) and a sufficient condition (Proposition~\ref{prop:margin}) make this precise.

    \item \textbf{The computational complexity analysis} (Propositions~\ref{prop:woodbury}--\ref{prop:complexity}): The low-rank representation $g_t = I + U_t U_t^\top$ makes the metric operations tractable, with geodesic distance at $O(d \cdot r)$ precomputation per token plus $O(r)$ per pair and metric inversion at $O(d \cdot r^2)$ via Woodbury, both far below the $O(d^3)$ baseline, and a score-level overhead $O(r/d)$. The metric-generation MLP ($O(d^2 r)$ parameters, Proposition~\ref{prop:expressivity}) costs $O(BL d^2 r)$ per layer, so the total overhead relative to the $O(BL^2 d)$ attention cost is $O(r/d + dr/L)$, small when $L \gg dr$. These are asymptotic complexity bounds; whether they translate into practical billion-parameter-scale training remains to be demonstrated empirically.

    \item \textbf{The Fiber Bundle Transformer architecture} (Section~\ref{sec:architecture}): A complete specification, named by structural analogy to the differential-geometric notion rather than a strict fiber-bundle construction, in which token positions are fibers with per-token metrics, attention computes geodesic distances, the connection carries curvature and torsion proxies, and feed-forward layers use metric-preconditioned updates. Each component is mathematically justified, with explicit analysis of the approximation errors introduced at each phase (e.g., the $O(\|\Delta_i\|_F^2)$ isometry violation in Phase D).

    \item \textbf{Formal predictions} (Section~\ref{sec:predictions}): Curvature heterogeneity should emerge as an optimization consequence (Conjecture~\ref{conj:hetero}), train/inference blending mismatch is generically suboptimal relative to retraining (Proposition~\ref{prop:consistency}, proved via the implicit function theorem without convexity assumptions), metric collapse is the dominant failure mode (Conjecture~\ref{conj:collapse}), and the anti-collapse benefit scales with metric rank (Conjecture~\ref{conj:scaling}). These predictions identify the conditions under which empirical evaluation can validly test the framework.
\end{enumerate}

\subsection{Predictions to Be Empirically Tested}

This paper is deliberately theoretical. The formal results and conjectures of Sections~\ref{sec:perturbation}--\ref{sec:predictions} generate a set of \emph{testable predictions} that any future empirical study of this framework should verify. We frame these not as a prescriptive ``protocol'' but as the natural empirical consequences of the theory, each tied to a specific result above:

\begin{enumerate}[label=\textbf{P\arabic*.},leftmargin=*]
    \item \textbf{Critical-regime transition} (from Conjecture~\ref{conj:rankstrong}): Training at metric strengths $\bar U^2$ near $\Theta(\tau/L)$ should show a transition from Euclidean-like behavior (sub-critical, collapse proceeds) to geometric behavior (super-critical, collapse resisted), \emph{provided} the learned metrics maintain directional overlap with the active query--key differences and produce contrastive (non-row-constant) corrections (Section~\ref{sec:counterexample}). Without such overlap, no transition can occur regardless of $\bar U^2$; monitoring the projected metric activity on $D$ is therefore part of the protocol. This is the central prediction of the perturbation analysis.

    \item \textbf{Train/inference $\alpha$ consistency} (from Proposition~\ref{prop:consistency}): Any blended model should be evaluated at the same $\alpha$ used in training, or retrained at the inference $\alpha$. Evaluating at a different $\alpha$ without retraining produces a confounded, generically suboptimal result.

    \item \textbf{Metric collapse monitoring} (from Conjecture~\ref{conj:collapse}): Training should monitor $\|U_t\|_F^2$ and cross-token metric variance. If the metric collapses to zero or to a homogeneous solution, the geometric advantage is lost regardless of benchmark performance.

    \item \textbf{Curvature heterogeneity verification} (from Conjecture~\ref{conj:hetero}): After training, the distribution of $\kappa_t = \|U_t\|_F^2$ across tokens should be heterogeneous and correlated with semantic complexity. Homogeneous curvature indicates the architecture has degenerated to a global-metric Transformer.

    \item \textbf{Controlled baseline comparison}: Effective rank comparisons must be against a \emph{full residual} Transformer trained under matched conditions, not against the pure-attention asymptotic limit of Dong et al.\ (which does not represent practical Transformers).

    \item \textbf{Metric rank ablation} (from Conjecture~\ref{conj:scaling}): The effective rank and task performance should improve with increasing $r$; the qualitative prediction is that a sublinear-in-$L$ threshold $r_{\min}$ exists, below which the geometric correction is insufficient.
\end{enumerate}

We emphasize that prediction P1 is the most directly falsifiable: if the predicted transition at $\bar U^2 \approx \Theta(\tau/L)$ is \emph{not} observed in experiments, the perturbation analysis (Proposition~\ref{prop:perturb}) would need to be revisited, since the sub-critical regime there is a theorem, not a conjecture.

\subsection{Limitations of the Current Theory}

We are explicit about what the theory does \emph{not} prove:

\begin{itemize}[leftmargin=*]
    \item \textbf{The non-Gram property does not address the core collapse mechanism.} Theorem~\ref{thm:nongram} establishes that $S^{\mathrm{Riem}}$ cannot be factorized as $QK^\top$ with $O(d)$-dimensional factors. However, the fundamental collapse mechanism, convergence of products of row-stochastic matrices to rank-1, is driven by the softmax, not by the Gram structure of the pre-softmax scores. The Riemannian attention matrix $A^{\mathrm{Riem}} = \softmax(S^{\mathrm{Riem}})$ remains row-stochastic, and the non-Gram property provides no guarantee that the row-stochastic contraction is avoided. The strengthened conjecture (Conjecture~\ref{conj:rankstrong}) identifies what a positive result would require: Section~\ref{sec:counterexample} shows that raw metric strength and diversity alone are insufficient, and that directional overlap between the metric directions and the active query--key difference subspace $D$ (Equation~\eqref{eq:activesubspace}), together with a positive contrastive margin (Proposition~\ref{prop:margin}), are necessary. The super-critical regime, where anti-collapse would need to occur, remains unproven, and the perturbation analysis (Proposition~\ref{prop:perturb}) is informative only in the sub-critical regime where collapse is confirmed.

    \item \textbf{Metric activity is contrastive and directional, not global.} The counterexample of Section~\ref{sec:counterexample} exposes a general obstruction: whenever all metric directions $U_t$ lie in the orthogonal complement of the active query--key difference subspace $D$ (Equation~\eqref{eq:activesubspace}), the metric correction $G_{ij}$ vanishes identically and the Riemannian attention matrix degenerates to its Euclidean counterpart, no matter how large or heterogeneous the metrics are. What matters for anti-collapse is the \emph{contrastive} metric activity $G_{ij} - G_{ik}$ across competing keys (Equation~\eqref{eq:contrastivemargin}), not the raw metric strength $\|U_t\|_F^2$. Nothing in the architecture guarantees a priori that the learned metric directions overlap the residual-stream directions that queries and keys actually inhabit; monitoring the projected metric activity on $D$ is therefore part of any empirical protocol.

    \item Proposition~\ref{prop:consistency} (blending consistency) is a \emph{local} result: it holds in a neighborhood of $\alpha_{\mathrm{train}}$ under the assumptions stated. It does not compare the absolute loss at $\alpha_{\mathrm{infer}}$ versus $\alpha_{\mathrm{train}}$, only the relative suboptimality versus retraining.

    \item The architecture is specified but not empirically validated. The predictions in Section~\ref{sec:predictions} are consequences of the architecture's structure, but their \emph{magnitude} (how much rank is preserved in the super-critical regime, how much task performance improves) requires experimentation.

    \item The regularization design (Section~\ref{sec:regularization}) is theoretically motivated but not proven optimal. The specific values of $\kappa_{\min}$, the diversity weight, and the scheduling require empirical calibration.

    \item The torsion--semantics correspondence (Open Problem O3) is conjectural. Whether torsion actually encodes directed relations in trained models is an empirical question.

    \item The ``Fiber Bundle Transformer'' is named by structural analogy, not a strict differential-geometric construction (Section~\ref{sec:fiberbundle}). The curvature and torsion objects of Phases B, C are proxies, not the Riemann and torsion tensors of a true connection. A rigorous formulation is left as Open Problem O6.

    \item The metric-preconditioned FFN (Section~\ref{sec:naturalffn}) is a Riemannian-gradient-inspired update, not a natural gradient in the strict Amari sense (which would require a Fisher information metric on parameter space). Moreover, with the chosen parameterization $g_t = I + U_t U_t^\top \succeq I_d$, the metric only stretches directions, so $g_t^{-1} \preceq I_d$ only \emph{down-weights} updates along stretched directions; it never amplifies compressed directions as a general Riemannian-gradient step would. A parameterization admitting eigenvalues below $1$ (e.g.\ $g_t = I + U_t U_t^\top - V_t V_t^\top$ with a PSD projection to ensure positive-definiteness) would restore the full preconditioning effect but at higher computational cost. Whether the representation-space preconditioning accelerates convergence is open (Open Problem O4).

    \item Theorem~\ref{thm:nongram} is a statement about the \emph{functional} (universal) factorization of the Riemannian score, not about the rank of a specific $L \times L$ matrix. For a specific input with $L \leq d$, the score matrix has rank $\leq L = O(d)$ and a $d' = O(d)$ factorization trivially exists; the $\Omega(d^2)$ lower bound is meaningful only in the regime $L = \Omega(d^2)$ or under the functional interpretation (Equation~\eqref{eq:nongrambound}).

    \item The diversity regularizer $\mathcal{L}_{\mathrm{div}}$ (Section~\ref{sec:regularization}) maximizes pairwise metric distance without a principled upper bound on $\|U_t\|_F^2$; the soft cap $\mathcal{L}_{\mathrm{cap}}$ of Equation~\eqref{eq:cap} is a pragmatic patch, and the precise value of $\kappa_{\max}$ relative to the critical threshold $\bar U^2_{\mathrm{crit}} = \Theta(\tau/L)$ requires empirical calibration.
\end{itemize}

\subsection{Conclusion}

The flat Euclidean geometry of Transformer attention is a structural constraint with proven consequences: in pure self-attention stacks, representational rank collapses doubly exponentially with depth. The core mechanism is the row-stochasticity of the attention matrix: after softmax, each row is a probability distribution, and products of such matrices converge to rank-1. Per-token Riemannian metrics, represented as low-rank perturbations $g_t = I + U_t U_t^\top$, change the algebraic structure of the attention scores (Theorem~\ref{thm:nongram}: they cannot be factored as $QK^\top$ with $O(d)$-dimensional factors) but do \emph{not} alter the row-stochasticity of the post-softmax attention matrix. Whether the heterogeneous metric perturbations are sufficient to prevent the row-stochastic contraction, i.e.\ whether they create a persistent spectral gap $\sigma_2(A^{\mathrm{Riem}}) > c > 0$, is the central open problem identified by this framework (Conjecture~\ref{conj:rankstrong}). Section~\ref{sec:counterexample} settles part of this question negatively: metric strength and diversity alone cannot suffice, and any positive resolution must come from directional overlap between the metric directions and the active query--key difference subspace, together with contrastive margins between competing keys (Proposition~\ref{prop:margin}).

The perturbation analysis (Proposition~\ref{prop:perturb}) reveals a self-limiting structure: when the metric correction is small ($\bar U^2 \ll \tau/L$), the Riemannian attention is a perturbation of Euclidean attention and collapse proceeds as before; when the correction is large ($\bar U^2 \gg \tau/L$), the perturbation bound becomes vacuous and standard tools provide no information. The regime where anti-collapse would need to occur is precisely the regime where the current analysis is silent; Section~\ref{sec:counterexample} shows that this silence is not merely an artifact of the method: even at arbitrarily large metric strength, the correction can vanish exactly when the metric directions are orthogonal to the active query--key differences.

On the constructive side, we have established complexity bounds showing that the low-rank parameterization adds $O(d \cdot r)$ precomputation per token for the geodesic distance and $O(d \cdot r^2)$ for the metric inversion via the Woodbury identity (Proposition~\ref{prop:woodbury}), with a score-level overhead of $O(r/d)$; accounting for the metric-generation MLP, the total overhead relative to the attention cost is $O(r/d + dr/L)$ (Proposition~\ref{prop:complexity}); we stress that these are complexity bounds, and that runtime feasibility and training stability at scale remain to be demonstrated empirically. We have also identified a sufficient condition for spectral-gap preservation, a permutation margin on the pre-softmax scores with summable leakage (Proposition~\ref{prop:margin}), and shown in Section~\ref{sec:counterexample} that, absent directional overlap, the metric correction can be exactly inert. The Fiber Bundle Transformer architecture, framed as a structural analogy with explicit approximation-error analysis at each phase, provides a complete specification. The train/inference consistency result (Proposition~\ref{prop:consistency}) establishes that empirical evaluation must match training and inference blending coefficients (or retrain), and the metric collapse analysis (Conjecture~\ref{conj:collapse}) identifies the dominant failure mode requiring architectural countermeasures.

We have been deliberately precise about what is proven (the non-Gram structural property in the non-degenerate regime, the perturbation bound, the orthogonal-subspace obstruction and its explicit counterexample, the permutation-margin sufficient condition for spectral-gap preservation, the local blending-consistency result, and the asymptotic complexity bounds), what is conjectured with explicit conditions (the strengthened rank preservation under directional conditions, the critical-regime transition, the metric collapse failure mode), and what remains open (the super-critical spectral analysis under directional conditions, the differential-geometric formalization, the convergence of the preconditioned FFN, and the empirical question of whether the learned metrics maintain the required directional overlap in practice). The central gap, proving or disproving that heterogeneous Riemannian metrics prevent the rank collapse that row-stochastic attention matrices otherwise cause under the directional conditions identified in Section~\ref{sec:counterexample}, defines the research agenda that this framework equips future work to address.

\section*{Acknowledgments}

The author thanks the open-source community for the tools and mathematical resources that made this work possible, and an independent researcher who communicated the orthogonal-subspace counterexample and the corrected sufficient conditions of Section~\ref{sec:counterexample}; that feedback materially improved the precision of the claims made here.


\end{document}